\newtheorem{theorem}{Theorem}[section]
\title{Robusta: Robust AutoML for Feature Selection via Reinforcement Learning}
\author{

    Xiaoyang Wang \textsuperscript{\rm 1},
    Bo Li \textsuperscript{\rm 1},
    Yibo Zhang \textsuperscript{\rm 1},
    Bhavya Kailkhura \textsuperscript{\rm 2},
    Klara Nahrstedt \textsuperscript{\rm 1}\\
}
\begin{document}

\maketitle

\begin{abstract}
Several AutoML approaches have been proposed to automate the machine learning (ML) process, such as searching for the ML model architectures and hyper-parameters.
However, these AutoML pipelines only focus on improving the learning accuracy of benign samples while ignoring the ML model robustness under adversarial attacks.
As ML systems are increasingly being used in a variety of mission-critical applications, improving the robustness of ML systems has become of utmost importance.
In this paper, we propose the first robust AutoML framework, {\em Robusta}--based on reinforcement learning (RL)--to perform feature selection, aiming to select features that lead to both accurate and robust ML systems.
We show that a variation of the 0-1 robust loss can be directly optimized via an RL-based combinatorial search in the feature selection scenario.
In addition, we employ heuristics to accelerate the search procedure based on feature scoring metrics, which are mutual information scores, tree-based classifiers feature importance scores, F scores, and Integrated Gradient (IG) scores, as well as their combinations.
We conduct extensive experiments and show that the proposed framework is able to improve the model robustness by up to 22\% while maintaining competitive accuracy on benign samples compared with other feature selection methods.
\end{abstract}

\section{Introduction}
Despite the remarkable success of machine learning (ML) approaches, the existence of adversarial examples has raised serious concerns regarding their utility in safety-critical applications such as banking, IoT, autonomous driving, etc. Recent works~\cite{eykholt2018robust, li2019adversarialc} show that real-world model-evasion adversarial attacks are realistic on image and audio data as well. Model-evasion adversarial examples are data samples that are added with imperceptible adversarial perturbations to cause the ML models to make wrong predictions at test/inference time. 

Recently, researchers have proposed many methods to improve the ML model robustness against adversarial examples. Most of these methods can be categorized into two major classes: adversarial training~\cite{goodfellow2018defense, madry2017towards} and robust regularization~\cite{finlay2018lipschitz}. The adversarial training generates adversarial examples during training and then uses them to augment the dataset. The robust regularization, on the other hand, adds a regularization term in the training objective to control certain properties of the ML model, e.g., smoothness. However, none of these approaches takes feature selection into consideration, which is a crucial step in the ML pipeline. Although the popularity of deep learning seemingly diminishes the importance of feature selection (or engineering), it is still a crucial component in many safety-critical scenarios, where people tend to trust simpler models.

On the feature selection side, several stable feature selection methods~\cite{khaire2019stability} have been proposed, which aim to produce consistent feature selection results under small data perturbations. The idea behind stable feature selection is to inject noise into the original dataset by generating bootstrap samples of the data and then use a base feature selection algorithm (like the LASSO) to find out which features are important in every sampled version of the data. However, in the experiment, we find that the stable feature selection algorithm indeed selects a consistent subset of features but leads to poor robustness against adversarial attacks. The reason is that the base feature selection algorithm has no idea about the feature robustness. Thus, there is no guarantee that the base algorithm will select only robust features from a pool of robust and non-robust features. In other words, the stability and robustness of feature selections algorithms may be orthogonal concepts, and improving stability does not result in improving the robustness.


So far, improving the ML model robustness against adversarial attacks in the feature selection scenario is still non-trivial due to three fundamental limitations. First, there does not exist an {\em effective metric to quantify the feature robustness} against adversarial attacks, which makes it hard to search for robust features. Second, there is no {\em general adversarial robustness evaluation framework} to be used for feature selection, because most of the existing methods are either attack/model specific or considered intractable. Third, achieving good performance and robustness requires {\em carefully making the trade-off decisions} \cite{tsipras2018robustness, zhang2019theoretically}.

In this paper, we overcome all these limitations and propose the first robust AutoML framework to perform feature selection to achieve both high performance as well as high robustness. Specifically, we present a fully automated reinforcement learning-based (RL) framework, referred to as \textit{Robusta}, to automatically search for informative and robust features. Our RL agent efficiently searches over a combinatorial space under the guidance of heuristics, which are based on feature scoring metrics, and automatically achieves a desirable trade-off between performance and robustness. We also introduce Integrated Gradient (IG)~\cite{sundararajan2017axiomatic} as a new feature scoring metric, to enable our RL agent to find robust features in the search space. Furthermore, we directly optimize an attack-agnostic 0-1 robust loss, which is considered intractable in previous works~\cite{zhang2019theoretically, zhai2020macer}. 
Compared with other multi-objective feature selection methods \cite{xue2014multi}, our method automatically performs end-to-end optimization without manually designed pipelines.

{\bf \underline{Technical Contributions}}
In this paper, we take the first step
towards developing a general robust feature selection framework. Our major contributions are as follows:

\begin{itemize}
\item Proposed an RL-based framework, called Robusta, to improve the ML model robustness at the feature level.
\item Designed a general 0-1 robust loss and embedded it into a non-sparse RL reward.
\item Adopted Integrated Gradient to guide the framework searching for robust features.
\item Examined the method with a variety of dataset subject to adversarial attacks.
\end{itemize}

\section{Preliminaries and Problem Setup}
\label{problem_setup}

\paragraph{Notations}
Given the input data $\bm{X}$ with $N$ samples and $M$ features, we table the data as $\bm{X} = \{\bm{x}_{i} \mid \bm{x}_{i} \in \mathcal{X} \subset \mathbb{R}^{M}, i = 1, ..., N\}$. In addition, we use $\bm{x}^{j}$ to denote the data associated with feature $j$. The input data $\bm{X}$ is associated with labels $\bm{Y} = \{y_{i} \mid y_{i} = \{1, 2, ..., K\}, i = 1, ..., N\}$. We assume each $(\bm{x}, y)$ is drawn from an unknown distribution $\mathcal{D}$. We use $\bm{c} \in \mathcal{C} = \{c^{j} \mid c^{j} \in \{0, 1\}, j = 1, ..., M\}$ to denote the feature selection vector. Once feature $j$ is selected, its corresponding $c^{j}$ is set to $1$ in $\bm{c}$. We use $m$ to denote the number of non-zero elements in $\bm{c}$. We define a feature selection function $g_{\bm{c}}: \mathbb{R}^{M} \xrightarrow{} \mathbb{R}^{m}$ parameterized by $\bm{c}$. We denote a subset of $\bm{X}$ which is selected by $g_{\bm{c}}$ as $\bm{Z} = \{\bm{z}_{i} \mid \bm{z}_{i} \in \mathcal{Z} \subset \mathbb{R}^{m}, g_{\bm{c}}(\bm{x}_{i}) = \bm{z}_{i}, i = 1, ..., N\}$. We use $f_{\bm{w}} : \mathbb{R}^{m} \xrightarrow{} \mathcal{Y}$ to denote an ML model parameterized by $\bm{w}$ and use $Q_{\theta} : \mathcal{S} \times \mathcal{A} \xrightarrow{} \mathbb{R}$ to denote a Q-network parameterized by $\theta$. We use $s \in \mathcal{S}$ and $a \in \mathcal{A}$ to denote the state and action in RL, respectively. We use $\mathbf{1}_{\{\mathrm{event}\}}$ to represent an indicator function, which is 1 if the event happens and 0 otherwise. We define $s_{0}$ as the initial state. We assume each $(s, a)$ pair comes from a behavior distribution $\rho (\cdot)$. Given $\mathcal{A}$, a policy $\pi \in \Pi$ over $\mathcal{S}$ is any function $\pi: \mathcal{S} \xrightarrow{} \mathcal{A}$. Reward function $R: \mathcal{S} \times \mathcal{A} \times \mathcal{S} \xrightarrow{} \mathbb{R}$ assigns reward $r$ for acting $a$ at state $s$ which lead to next state $s'$. A value function $V^{\pi}(s) = \mathbb{E}\big[r_{1} + \gamma r_{2} + \gamma^{2} r_{3} + ... \mid \pi, s \big]$ represents the discounted cumulative reward, where $r_{i}$ is the reward received on the $i^{\mathrm{th}}$ step of applying policy $\pi$ at state $s$ and $\gamma$ is the discount factor. We use $\delta$ to denote a perturbation created by adversarial attack, which can be applied to $\bm{x}$. We use $\mathbb{B}(\bm{x}, \epsilon)$ to represent a neighborhood of $\bm{x}: \{\bm{x'} \in \mathcal{X} \mid ||\bm{x'} - \bm{x}|| \leq \epsilon\}.$

\paragraph{Robust Radius} By definition, the $l_{\infty}^{\epsilon}$-robustness of $f_{\bm{w}}$ at a data point $(\bm{x}_{i}, y_{i})$ is decided by the radius of the largest $l_{\infty}$ ball centered at $\bm{x}_{i}$ in which the prediction made by $f_{\bm{w}}$ does not change. This radius is called the \textit{robust radius}, which is formally defined as

\begin{equation}
    \begin{split}
        &\phi(f_{\bm{w}}; \bm{x}, y) = \\
        &\left\{
        \begin{array}{ll} 
            &\max \{\phi \geq 0 \mid f_{\bm{w}}(\bm{x'}) = y, \forall \bm{x'} \in \mathbb{B}(\bm{x}, \phi)\}, \\ &\mathrm{when}~f_{\bm{w}}(\bm{x}) = y \\
            &0, \mathrm{otherwise}
        \end{array} 
    \right.
    \end{split}
\end{equation}

\paragraph{Robust Error} To characterize the robustness of a classifier $f_{\bm{w}}(\bm{x})$, we define \textit{robust (classification) error} using a straightforward empirical risk minimization (ERM) formulation \cite{zhang2019theoretically} 
under the threat model of bounded $\epsilon$ perturbation on data sample $(\bm{x}, y)$ as

\begin{equation}
    \ell_{\epsilon-robust}^{0-1} (f_{\bm{w}}; \bm{x}, y) := \mathbf{1}_{\{\phi(f_{\bm{w}}; \bm{x}, y) < \epsilon\}}.
\end{equation}

Besides the robust 0-1 loss, several previous works replace the 0-1 loss with surrogate losses. We discuss why we choose the 0-1 loss from a framework's perspective in Section 3.

The robust error can be decomposed as a sum of two error terms, which are named as \textit{natural error} and \textit{boundary error} in~\cite{zhang2019theoretically} because event $\phi(f_{\bm{w}}; \bm{x}, y) < \epsilon$ happens when: (1) the ML model makes a wrong prediction on $\bm{x}$ or (2) the ML model makes a correct prediction on $\bm{x}$ but the robust radius is not large enough. Thus, we have
\begin{equation}
\label{equation:robust_error_individual}
    \begin{split}
        \ell&_{\epsilon-robust}^{0-1} (f_{\bm{w}}; \bm{x}, y) \\
        &= \mathbf{1}_{\{f_{\bm{w}}(\bm{x}) \neq y\}} + \mathbf{1}_{\{(f_{\bm{w}}(\bm{x}) = y) \cap( \phi(f_{\bm{w}}; \bm{x}, y) < \epsilon)\}} \\
        &= \ell_{nat}^{0-1}(f_{\bm{w}}; \bm{x}, y) + \ell_{\epsilon-bdy}^{0-1}(f_{\bm{w}}; \bm{x}, y).
    \end{split}
\end{equation}

Although directly minimizing the the error above seems tempting, verifying the robust radius is proved to be NP-hard~\cite{weng2018towards}. Thus, we instead try to generally maximize the robustness of the ML model against adversarial examples. As recent works~\cite{ford2019adversarial, He_2019_CVPR} bridge the gap between adversarial examples and corrupted examples with additive Gaussian
noise, we adopt an attack-agnostic error to measure the robustness of an ML model against adversarial attacks. Before introducing the error, we first define $\bm{x'} \sim \mathcal{N}(\bm{x}, \sigma^{2}I)$ as the corrupted example. According to~\cite{ford2019adversarial}, we let $\sigma_{\bm{x}, \mu}$ be the $\sigma$ where the error rate $\mu = \mathop{\mathbb{E}}_{\bm{x'} \sim \mathcal{N}(\bm{x}, \sigma^{2}I)} \big[ f_{\bm{w}}(\bm{x'}) \neq y \big]$. Then, we use the \textit{expected Gaussian error}, which is shown below, to replace the expected boundary error.
\begin{equation}
    \mathcal{L}_{\epsilon-Gaussian}^{0-1}(f_{\bm{w}}) = \mathop{\mathbb{E}}_{\substack{(\bm{x}, y) \sim \mathcal{D} \\ \bm{x'} \sim \mathcal{N}(\bm{x}, \sigma_{\bm{x}, \mu}^{2}I)}} \Big[ \mathbf{1}_{\{f_{\bm{w}}(\bm{x}) = y, f_{\bm{w}}(\bm{x'}) \neq y\}} \Big].
\end{equation}

Next, we derive the estimated expected robust (classification) error from the dataset while taking feature selection vector $\bm{c}$ and feature selection function $g_{\bm{c}}$ into account. As $\bm{w_{c}}$ is decided by the training algorithm for a given dataset and a given set of features, we remove it from the parameters of the robust (classification) error. We define $\{\bm{x}'_{i,l} \mid \mathbb{E}_{\bm{w} \sim p_{\bm{w} \mid D}}\big[ f_{\bm{w}}(\bm{x}'_{i,l}) \neq y_{i}\big]\}_{l=1}^{L_{i}}$ as $L_{i}$ samples drawn from $\mathcal{N}(\bm{x}_{i}, \sigma_{\bm{x}_{i}, \mu}^{2}I)$ 
where $p_{\bm{w} \mid D}$ denotes the posterior of $\bm{w}$ given an observed dataset $D$. Condition $\mathbb{E}_{\bm{w} \sim p_{\bm{w} \mid D}}\big[ f_{\bm{w}}(\bm{x}'_{i,l}) \neq y_{i}\big]$ is added to improve the efficiency of our framework by skipping the corrupted data samples which are not harmful. The expectation is estimated by training $f_{\bm{w}}$ and evaluating $f_{\bm{w}}(\bm{x}'_{i,l})$ multiple times. In the following sections, we refer to the equation below as \textit{0-1 robust loss}

\begin{equation}
    \begin{split}
        &\hat{\mathcal{L}}_{\epsilon-robust}^{0-1} (g_{\bm{c}}) := \hat{\mathcal{L}}_{nat}^{0-1}(g_{\bm{c}}) + \hat{\mathcal{L}}_{\epsilon-Gaussian}^{0-1}(g_{\bm{c}})  \\
        &= \frac{1}{N} \sum_{i=1}^{N} \mathbf{1}_{\{f_{\bm{w_{c}}}(g_{\bm{c}}(\bm{x}_{i})) \neq y_{i}\}} + \frac{1}{\sum_{i=1}^{N} L_{i}} \sum_{i=1}^{N} \sum_{l=1}^{L_{i}} \mathbf{1}_{\{E_{adv}\}}
    \end{split}
    \label{equation:error_func}
\end{equation}
where $E_{adv} = (f_{\bm{w_{c}}}(g_{\bm{c}}(\bm{x}_{i})) = y_{i})\cap(f_{\bm{w_{c}}}(g_{\bm{c}}(\bm{x'}_{i,l})) \neq y_{i})$
\paragraph{Deep Q-Learning}
We use model-free deep Q-learning~\cite{mnih2013playing} with function-approximation to learn the action-value Q-function. For each state, $s \in \mathcal{S}$ and action, $a \in \mathcal{A}$, a Q-network is defined as:
\begin{equation}
\label{q_learning}
    Q_{\theta}(s, a) = \mathop{\mathbb{E}}_{s' \in \mathcal{S}} \Big[r_{s, a} + \gamma\max_{a'\in \mathcal{A}}Q_{\theta}(s', a') \mid s, a \Big]
\end{equation}

where $r_{s, a}$ is the reward of action $a$ in state $s$, $\gamma$ is a discount factor, $\max_{a'}Q(s', a')$ represents the maximum cumulative reward for the future states $s'$ and action $a'$. The deep Q-learning then iteratively optimizes the following loss function using gradient descent:
\begin{equation}
\label{nn_approximation}
    \mathcal{L}(\theta) = \mathop{\mathbb{E}}_{(s, a) \sim \rho(\cdot)}\Big[ \big(\mathop{\mathbb{E}}_{s' \in \mathcal{S}} \Big[R_{s, a} \mid s, a \Big] - Q_{\theta}(s, a) \big) ^2 \Big]
\end{equation}
where $R_{s, a} = r_{s, a} + \gamma\max_{a'\in \mathcal{A}}Q_{\theta_{Old}}(s', a')$ and $Q_{\theta_{Old}}$ is the target network, which gets updated periodically.
The parameter update follows standard gradient descent: $\theta_{t + 1} \xleftarrow{} \theta_{t} + \alpha \nabla_{\theta_{t}}\mathcal{L}(\theta_{t})$, where $\alpha$ is the learning rate, $t$ is the training step. When the Q-network training converges, we construct an optimal policy $\pi^{*}(s) = \mathop{\arg\max}_{a \in \mathcal{A}}Q^{*}(s, a)$, which yields highest discounted cumulative reward $V^{\pi^{*}}(s)$.
\paragraph{Problem Setup}
We want to train a single-agent deep Q-learning model that learns a policy $\pi$ which performs a sequence of actions on the feature selection vector $\bm{c}$. The policy $\pi$ produces $\bm{c}_{\pi}$ that minimizes the 0-1 robust loss defined in Equation~\eqref{equation:error_func} by maximizing $V^{\pi}(s_{0})$ of the initial state $s_{0}$.

\begin{equation}
    \bm{c}_{\pi} = \mathop{\arg\min}_{\bm{c} \in \mathcal{C}} \hat{\mathcal{L}}_{\epsilon-robust}^{0-1} (g_{\bm{c}})
\end{equation}

\begin{figure}[!t]
\begin{center}
\centerline{\includegraphics[width=1.\columnwidth]{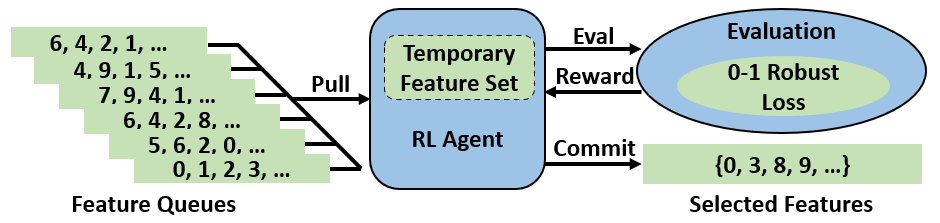}}
\caption{\textbf{RL-based Robust Features Selection Framework.}}
\label{fig:framework}
\end{center}
\end{figure}

\section{Robusta Framework Design}
A tempting approach for robust feature selection is to make the 0-1 robust loss and feature selection procedure differentiable, then, the standard gradient descent method can be employed to perform optimization. However, although differentiable surrogate loss has been designed for the 0-1 robust loss and shows success, they all introduce constraints on the downstream ML model. For example, MACER \cite{zhai2020macer} requires the ML model to be smoothed and TRADES \cite{zhang2019theoretically} only works for binary classification. As an AutoML framework, we want our proposed method to work for any downstream ML model. Moreover, there are related works \cite{nguyen2013algorithms, xue2020robust} indicate that the 0-1 loss is more robust against noise and adversarial attacks. Besides, differentiable feature selection is still problematic. Concrete distribution-based differentiable feature selection~\cite{abid2019concrete} has two major drawbacks which lead to sub-optimal solutions: (1) users need to specify the number of features $m$ to be selected, (2) the differentiable feature selection algorithm does not converge for some values of $m$. On the other hand, using RL to perform combinatorial optimization~\cite{nazari2018reinforcement, chen2019learning} and using a combinatorial search to directly optimize a 0-1 loss~\cite{nguyen2013algorithms} are shown to be successful in practice. Thus, we use an RL agent in Robusta to perform a combinatorial search in the domain $\mathcal{C}$ of the feature selection vector $\bm{c}$. We start with $\bm{c}_{0}$, in which all the elements are zero. At step $t$, the RL agent sets one element in $\bm{c}_{t-1}$ to one. The proposed framework, Robusta, is depicted in Figure~\ref{fig:framework}.

\subsection{Action}
\label{section:action}
The \textit{action}, $a \in A$ of the RL agent is defined as picking a zero element $c^{i}$, indexed by $i$ in $\bm{c}$, and set $c^{i}$ to $1$. If we use $a_{i}$ to denote picking $c^{i}$, the dimension of action space $\mathcal{A}$ equals to the number of features. Since the number of features can easily go beyond a few hundred even for a small dataset, the dimension of action space will become too large to be feasible. Thus, we utilize feature scoring metrics to craft an action space, whose dimension is invariant to the number of features. The RL agent picks a head element from one of the feature queues, which contains a list of features index ordered by their scores according to the corresponding feature scoring metric. The feature index with a higher score will be placed closer to the head of the queue. We have one queue for each feature scoring metric. We discuss the choice of metrics in section~\ref{section:metrics}.

\subsection{State}
\label{rl_state}

The \textit{state}, $s \in S$ represents the quality of selected features, the actions we have taken and the feature indexes our RL agent can pick in the next step. We use the 0-1 robust loss, which is obtained by training an ML model, to measure the quality of selected features. We use \textit{Action Counter} and \textit{Queue State} to represent previous actions as well as possible next steps, respectively. Action counter counts how many times an action has been taken. Queue state records the feature scores indexed by the head elements in each feature queue. By combining these three pieces of information, we provide the RL agent with an intuition about how promising an action can be in the current state.

\subsection{Reward}
\label{section:reward}

When the RL agent is about to terminate at step $t = T$ after applying a sequence of actions on $\bm{c}_{0}$ and produces $\bm{c}_{t}$, we give the agent a reward which is $(1 - \hat{\mathcal{L}}_{\epsilon-robust}^{0-1} (g_{\bm{c}_{t}})) \times 100$. To make the notation clear, we only use $1 - \hat{\mathcal{L}}_{\epsilon-robust}^{0-1} (g_{\bm{c}_{t}})$ in the following sections. The reward function is defined as
\begin{equation}
\label{equation:raw_reward}
    R(s_{t}, a_{t}, s_{t+1}) = \left\{
        \begin{array}{cl}
            1 - \hat{\mathcal{L}}_{\epsilon-robust}^{0-1} (g_{\bm{c}_{t}}) &, s_{t+1} \mathrm{\ is\ terminal\ }\\ 
            0 &, \mathrm{otherwise.}
        \end{array} 
    \right.
\end{equation}

We discuss the terminate condition in next section. If we only give the RL agent reward when it terminates, the agent will receive zero rewards for most of the steps. In other words, the reward would be sparse. The sparse reward usually makes the RL agent learning slow based on the regular RL principle. To address this sparse reward issue, we design a reward shaping function.

\paragraph{Reward Shaping} Reward shaping~\cite{ng1999policy} is a method used in reinforcement learning whereby additional training rewards are provided to guide the learning agent. We define reward shaping function in Robusta as

\begin{equation}
\label{equation:reward_shaping_func}
    \begin{split}
        &F(s, a, s') = \gamma\Phi(s') - \Phi(s) \\
        &= \gamma\big( 1 - \hat{\mathcal{L}}_{\epsilon-robust}^{0-1} (g_{\bm{c}_{s'}}) \big) - \big( 1 - \hat{\mathcal{L}}_{\epsilon-robust}^{0-1} (g_{\bm{c}_{s}}) \big) \\
    \end{split}
\end{equation}

where $\bm{c}_{s}$ and $\bm{c}_{s'}$ are the feature selection vectors in state $s$ and its following state $s'$, respectively. With this reward shaping function, we define our shaped reward function $R'(s, a) = R(s, a) + F(s, a)$. Using $R'$ yields faster learning speed because the RL agent will receive a non-zero reward at most of the steps. Then, we show the reward shaping function in Equation~\eqref{equation:reward_shaping_func} will not affect the optimal policy $\pi^{*}$ we want to learn.

\begin{theorem}
\label{theorem:3.1} (Proof in Appendix~\ref{proof:theorem3.1})
    Let us assume $\mathcal{E} = (\mathcal{S}, \mathcal{A}, \gamma, R)$ and $\mathcal{E'} = (\mathcal{S}, \mathcal{A}, \gamma, R')$ are two identical environments except reward function. If a reward function $R: \mathcal{S} \times \mathcal{A} \times \mathcal{S} \xrightarrow{} \mathbb{R}$ and a reward shaping function $F: \mathcal{S} \times \mathcal{A} \times \mathcal{S} \xrightarrow{} \mathbb{R}$ satisfy $\sum_{t=1}^{T} F(s_{t}, \pi(s_{t}), s_{t+1}) = \sum_{t=1}^{T} R(s_{t}, \pi(s_{t}), s_{t+1}) = R(s_{T}, \pi(s_{T}), s_{T+1})$ for any policy $\pi$ and any sequence length $T$, 
    then, we have $\pi_{\mathcal{E}}^{*} = \pi_{\mathcal{E'}}^{*}$ when $\gamma = 1$.
\end{theorem}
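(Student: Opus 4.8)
The plan is to prove the claim by showing that at the initial state the shaped return is a fixed, policy-independent, strictly increasing transformation of the unshaped return --- concretely ${V'}^{\pi}(s_0)=2\,V^{\pi}(s_0)$ for every policy $\pi$, where ${V'}^{\pi}$ is the value function of $\mathcal{E}'$. Since $x\mapsto 2x$ is strictly increasing, the maximizer over policies is unchanged, so the policy solving $\max_{\pi}V^{\pi}(s_0)$ (equivalently, producing the $\bm{c}_{\pi}$ that minimizes the $0$-$1$ robust loss) is the same in $\mathcal{E}$ and $\mathcal{E}'$, that is, $\pi^{*}_{\mathcal{E}}=\pi^{*}_{\mathcal{E}'}$. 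This is essentially the episodic, $\gamma=1$ specialization of potential-based reward shaping.

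To get the identity I would fix an arbitrary deterministic policy $\pi$ and unroll the episode it generates from $s_0$ as $s_0=s_1,s_2,\dots,s_{T+1}$ with $s_{T+1}$ terminal (transitions are deterministic, since an action merely moves one feature-queue head into $\bm{c}$). Because $\gamma=1$, the discounted return degenerates to a plain finite sum, so ${V'}^{\pi}(s_0)=\sum_{t=1}^{T}R'(s_t,\pi(s_t),s_{t+1})$ and $V^{\pi}(s_0)=\sum_{t=1}^{T}R(s_t,\pi(s_t),s_{t+1})$. Substituting $R'=R+F$ pointwise and splitting, ${V'}^{\pi}(s_0)=\sum_{t=1}^{T}R(s_t,\pi(s_t),s_{t+1})+\sum_{t=1}^{T}F(s_t,\pi(s_t),s_{t+1})$; the hypothesis says both sums equal $R(s_T,\pi(s_T),s_{T+1})$, and the first of them is exactly $V^{\pi}(s_0)$, so ${V'}^{\pi}(s_0)=2\,V^{\pi}(s_0)$ as desired. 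Feeding this into $\pi^{*}(s_0)=\arg\max_{a}Q^{*}(s_0,a)$ for each environment (choosing a common rule if the $\arg\max$ is not unique) yields the conclusion.

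The argument is short, and there is no deep obstacle; the places that need care are the following. (i) $\gamma=1$ is genuinely used: for $\gamma\neq1$ the partial sums would carry $\gamma^{t-1}$ weights not controlled by the undiscounted hypothesis on $F$. (ii) The finite-horizon/termination assumption (the terminate condition) is what turns the return into the finite sum above. (iii) The hypothesis should be read as a statement about the rollout $\pi$ induces from the start state $s_0$ --- applying it to rollouts launched from arbitrary intermediate states would, together with the telescoping form of $F$, force $F\equiv R$ --- so the conclusion is most safely phrased as equality of the optimal policies with respect to the objective $\max_{\pi}V^{\pi}(s_0)$ of the problem setup, which is precisely the quantity Robusta optimizes.
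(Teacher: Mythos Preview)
Your proposal is correct and follows essentially the same approach as the paper: both derive the identity ${V'}^{\pi}(s_0)=2\,V^{\pi}(s_0)$ from the hypothesis (by splitting $R'=R+F$ and using that each sum equals the terminal reward) and then conclude that the $\arg\max$ over policies is unchanged. The only cosmetic difference is that the paper finishes with a short contradiction argument whereas you invoke the monotonicity of $x\mapsto 2x$ directly; your added remarks on why $\gamma=1$ and finite termination are needed, and on reading the hypothesis as a statement about rollouts from $s_0$, are useful clarifications not spelled out in the paper.
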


It is easy to show that our $R$ and $F$ satisfy the requirements in Theorem~\ref{theorem:3.1} because the changes of 0-1 robust loss at each state add up to the final 0-1 robust loss.

\subsection{Implementation Details}
\label{section:implementation_details}

We discuss two rules used in our implementation.

\paragraph{Terminate Condition} For a given sequence of state $\{s_{0} \xrightarrow{} s_{1} \xrightarrow{} s_{2} \xrightarrow{} ... \xrightarrow{} s_{t} \xrightarrow{} ...\}$, the state $s_{t+1}$ is marked as terminal state if $\hat{\mathcal{L}}_{\epsilon-robust}^{0-1} (g_{\bm{c}_{s_{t+k}}}) - \hat{\mathcal{L}}_{\epsilon-robust}^{0-1} (g_{\bm{c}_{s_{t}}}) \geq 0$, which means the RL agent makes no progress in the following $k$ states of $s_{t}$. In the implementation, we set $k$ to 5. Once the terminal state $s_{t+1}$ is identified, its following states get discarded.

\paragraph{Eliminate Irrelevant and Non-robust Features}
Since the actions defined in section~\ref{section:action} only select features and add them to the selected subset, they unavoidably select bad features at some steps because we have no way to skip them. Thus, we directly eliminate a feature $i$ selected by action $a_{t}$ at step $t$ from the environment if $\hat{\mathcal{L}}_{\epsilon-robust}^{0-1} (g_{\bm{c}_{t-1}}) - \hat{\mathcal{L}}_{\epsilon-robust}^{0-1}(g_{\bm{c}_{t}}) \geq 0.05$. In addition, we delete $a_{t}$ and $s_{t}$ and again let the RL agent start at state $s_{t-1}$. Through out the experiment, the RL agent drops 66.6\%, 2.4\%, 0.2\% and 4.5\% features on average from SpamBase, Isolet, MNIST and CIFAR10 dataset, respectively. It drops a majority of features in SpamBase because most of the features are non-robust and the agent ends up with selecting only 4 features.

\section{Feature Selection Metrics in Robusta}
\label{section:metrics}

As we have mentioned in section~\ref{section:action}, we provide multiple feature scoring metrics for the RL agent. This approach falls naturally into the ensemble method, where output is obtained by aggregating the outputs from a collection of weak single models. Ensemble method has been well-studied and is shown to be effective in classification/regression tasks and feature selection task~\cite{saeys2008robust}. Since we do not have any reliable metric to scoring the features based on their robustness, as is shown in Appendix~\ref{appendix:metrics}, the ensemble method provides more potentials for selecting a subset of robust features by combining relevant scoring metric. In this work, we employ three widely used metrics in Robusta to score features based on their predictive power and propose one metric to score features based on their robustness against adversarial attacks.

\subsection{Performance Metric}
There are many performance metrics in feature selection based on information theory, redundancy and relevance \cite{peng2005feature} and, etc. In this work, we mainly aim to develop a general framework for wrapper-based ensemble feature selection. Thus, we only choose three simple and well-known performance metrics to show the effectiveness of Robusta.

\paragraph{Mutual Information Score} Mutual information (MI) score, $MI(X^{j}, Y)$, is a measure between two random variables $X^{j}$ and $Y$, where $X^{j} \sim D_{\bm{x}^{j}}$ and $D_{\bm{x}^{j}}$ denotes the data distribution associated with $\bm{x}^{j}$. MI score quantifies the amount of information obtained about one random variable, through the other random variable. The higher the mutual information score is, the more predictive power feature $j$ has for predicting $Y$. 

\paragraph{Tree Score} Tree score is the importance score where a trained tree classifier assigns to each feature. This metric leverages the intrinsic properties of the tree-based classifier.

\paragraph{F Score} $F_{score}$ is a transformation of $F_{value}$. $F_{value}$ is the ratio of two $\chi$-distributions divided by their degrees of freedom $N$ as equation~\eqref{f_value} shows:

\begin{equation}
\label{f_value}
    F_{value} = (\chi_{X^{i}}^{2} / N - 1) / (\chi_{Y}^{2} / N - 1)
\end{equation}

where $X^{j} \sim D_{\bm{x}^{j}}$. We convert $F_{value}$ to $F_{score}$ by taking the absolute value of $F_{value}$, normalizing, and subtracting the values from 1. Thus, a larger $F_{score}$ implies more predictive power.

\subsection{Robustness Metric}
\label{robust_metric}
To measure the robustness, we adopt Integrated Gradient (IG)~\cite{sundararajan2017axiomatic}, which is widely used in attributing the prediction $y$ of a neural network to its $j^{th}$ input features $x^{j}$, in Robusta

\paragraph{Integrated Gradient} Integrated Gradient calculates the integral of gradients w.r.t. each input feature along the path from a given baseline to the input. Formally, it can be described as follows:
\begin{equation}
\label{ig}
	\mathrm{IG}_{f_{\bm{w}}}^{j}(\Bar{\bm{x}}, \bm{x}, y) = (\Bar{x}^{j}-x^{j})\int_{\alpha=0}^{1} \frac{\partial f_{\bm{w}}^{y}(x+\alpha\times(\Bar{x}-x))}{\partial x^{j}}d\alpha
\end{equation}

where $\bm{x}$ denotes the input, $\Bar{\bm{x}} = \frac{1}{N}\sum_{i=1}^{N} \bm{x}_{j}$ denotes the baseline input. $x^{j}$ is the $j^{th}$ feature of $\bm{x}$. $f_{\bm{w}}^{y}$ denotes the ML model function associated with a prediction $y$. The IG score represents the contribution of each feature $x^{j}$ to prediction $y$. Then, we show the relationship between IG and adversarial robustness against adversarial perturbation $\delta$.

\begin{theorem} (Theorem 5.1 in \citealt{chalasani2018concise})
    If a loss function $\ell(f_{\bm{w}}; \bm{x}, y)$ is convex, we have
    \begin{equation}
    \begin{split}
        &\ell(f_{\bm{w}}; \bm{x}, y) + \max_{||\bm{x} - \bm{x'}||_{\infty} \leq \epsilon} ||\mathrm{IG}_{f_{\bm{w}}} (\bm{x}, \bm{x} + \delta, y)||_{1} \\
        &= \max_{||\delta||_{\infty} \leq \epsilon} \ell(f_{\bm{w}}; \bm{x} + \delta, y)
    \end{split}
    \end{equation}
\end{theorem}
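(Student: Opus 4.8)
The workhorse is the \emph{completeness} (efficiency) axiom of Integrated Gradients: for a (sub)differentiable scalar map $h$ and points $\bm u,\bm v$, the line integral $\sum_j\mathrm{IG}^j_h(\bm u,\bm v,\cdot)$ telescopes, by the fundamental theorem of calculus along $\alpha\mapsto\bm u+\alpha(\bm v-\bm u)$, to $\pm\big(h(\bm v)-h(\bm u)\big)$, the sign being fixed by the orientation convention in Eq.~\eqref{ig}. I read the statement with $h(\cdot)=\ell(f_{\bm w};\cdot,y)$ regarded as a function of the \emph{input}, so that $\mathrm{IG}_{f_{\bm w}}$ there is the attribution of the \emph{loss} (needed for the two sides to be comparable) and $\bm x'=\bm x+\delta$, i.e.\ the maximization runs over $\|\delta\|_\infty\le\epsilon$. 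Completeness then gives, for every such $\delta$,
\[
\Big|\textstyle\sum_j\mathrm{IG}^j(\bm x,\bm x+\delta,y)\Big|=\big|\ell(f_{\bm w};\bm x+\delta,y)-\ell(f_{\bm w};\bm x,y)\big|.
\]

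First I would settle the ``$\ge$'' inequality (equivalently $\max_{\|\delta\|_\infty\le\epsilon}\ell(f_{\bm w};\bm x+\delta,y)\le\text{LHS}$) with \emph{no} use of convexity: for each feasible $\delta$,
\[
\ell(f_{\bm w};\bm x+\delta,y)-\ell(f_{\bm w};\bm x,y)\le\Big|\textstyle\sum_j\mathrm{IG}^j(\bm x,\bm x+\delta,y)\Big|\le\textstyle\sum_j\big|\mathrm{IG}^j(\bm x,\bm x+\delta,y)\big|=\|\mathrm{IG}(\bm x,\bm x+\delta,y)\|_1,
\]
and then take $\max$ over $\delta$ on both ends and add $\ell(f_{\bm w};\bm x,y)$. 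The content is the reverse inequality $\max_{\|\delta\|_\infty\le\epsilon}\|\mathrm{IG}(\bm x,\bm x+\delta,y)\|_1\le\max_{\|\delta\|_\infty\le\epsilon}\ell(f_{\bm w};\bm x+\delta,y)-\ell(f_{\bm w};\bm x,y)$, and here convexity of $\ell$ in its input argument is essential. The plan is to show that an $\ell_1$-maximizing perturbation $\delta^\star$ is (i) loss-\emph{increasing}, $\ell(f_{\bm w};\bm x+\delta^\star,y)\ge\ell(f_{\bm w};\bm x,y)$, and (ii) \emph{sign-aligned}, meaning every $\mathrm{IG}^j(\bm x,\bm x+\delta^\star,y)$ shares the sign of their sum; for then the completeness identity collapses $\|\mathrm{IG}(\bm x,\bm x+\delta^\star,y)\|_1$ to the loss gap $\ell(f_{\bm w};\bm x+\delta^\star,y)-\ell(f_{\bm w};\bm x,y)$, which is bounded by $\max_{\|\delta\|_\infty\le\epsilon}\ell(f_{\bm w};\bm x+\delta,y)-\ell(f_{\bm w};\bm x,y)$, as required. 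To get (i)--(ii) the strategy is to argue that a non-extremal, loss-decreasing, or misaligned $\delta$ cannot be $\ell_1$-optimal: writing $\mathrm{IG}^j(\bm x,\bm x+\delta,y)=-\delta^j\!\int_0^1\partial_j\ell(f_{\bm w};\bm x+\beta\delta,y)\,d\beta$, convexity makes the restricted slope $\beta\mapsto\langle\delta,\nabla\ell(f_{\bm w};\bm x+\beta\delta,y)\rangle$ nondecreasing, which — together with the vertex structure and boundary first-order behavior of a convex $\ell$ on the $\ell_\infty$-box — should force the coordinatewise path-averaged gradients to be consistently signed, so that flipping a misaligned coordinate only raises $\|\mathrm{IG}\|_1$; an aligned maximizer is then automatically loss-increasing by completeness.

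I expect step (ii) — coordinatewise sign control — to be the true obstacle. Pure stationarity is not enough, since \emph{both} antipodal vertices of a diameter of the $\ell_\infty$-box satisfy the boundary first-order conditions of a convex $\ell$, so one must genuinely use \emph{global} optimality and couple it to the monotonicity of the restricted gradient; one also has to reconcile the extremal problem for $\|\mathrm{IG}\|_1$ with the one for $\ell$. The added friction is that flipping one coordinate of $\delta$ moves the entire straight-line path $\bm x\to\bm x+\delta$ and hence changes \emph{every} coordinate of the path-averaged gradient, so the ``flip'' bookkeeping is not coordinate-local and the unconstrained signs of the Hessian's off-diagonal entries must be absorbed. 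The remaining loose ends are benign: subgradients replace $\nabla$ when $\ell$ is nonsmooth; coordinates with $\delta^{\star}_j=0$ contribute $\mathrm{IG}^j=0$ automatically; and ties among maximizers are immaterial since only the optimal \emph{value} enters. Since the statement is quoted verbatim as Theorem~5.1 of \citet{chalasani2018concise}, an alternative to reconstructing this argument is simply to cite it.
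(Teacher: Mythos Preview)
The paper itself offers no proof of this theorem: it is quoted verbatim as Theorem~5.1 of \citet{chalasani2018concise} and used only to justify the IG-based robustness score. Your closing remark --- that one may simply cite the source --- is exactly what the paper does, so in that narrow sense your proposal already subsumes the paper's ``proof.''

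Your attempted reconstruction goes well beyond the paper. The inequality $\mathrm{RHS}\le\mathrm{LHS}$ via completeness and $|\sum_j\mathrm{IG}^j|\le\|\mathrm{IG}\|_1$ is correct and needs no convexity. For the reverse inequality your plan has the right shape, but --- as you yourself flag --- step~(ii) is not closed, and in fact the specific claim that an $\ell_1$-maximizing $\delta^\star$ must be sign-aligned is false. Take $\ell(x_1,x_2)=x_1^2+10x_2$ at $\bm x=\bm 0$ with $\epsilon=1$: then $\delta=(1,-1)$ attains the maximum $\|\mathrm{IG}\|_1=11$ with $\mathrm{IG}^1$ and $\mathrm{IG}^2$ of opposite sign, and it is loss-\emph{decreasing}. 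The theorem still holds here because $(1,1)$ is \emph{also} an $\ell_1$-maximizer and is sign-aligned, so what you would actually need is the existence of \emph{some} sign-aligned maximizer, or more directly that $\|\mathrm{IG}(\bm x,\bm x+\delta)\|_1\le\max_{\delta'}\ell(\bm x+\delta')-\ell(\bm x)$ for every feasible $\delta$. Your flip-a-coordinate heuristic does not deliver this, for precisely the reason you identify: flipping one coordinate of $\delta$ moves the whole integration path and perturbs every path-averaged partial, so the bookkeeping is not local and bare convexity does not control it. Absent the additional structure used in the original reference, citing \citet{chalasani2018concise} --- as the paper does --- is the sound route.
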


Several popular loss functions are convex, such as Logistic Negative Log Likelihood Loss and Hinge loss. This theorem implies that improving IG stability is equivalent to $\ell_{\infty}^{\epsilon}$-adversarial training~\cite{madry2017towards}. An intuitive explanation is provided in Appendix~\ref{section:intuition_IG}. Thus, we use $\frac{1}{N} \sum_{i=1}^{N}\mathrm{IG}_{f_{\bm{w}}}(\bm{x}_{i}, \bm{x}_{i} + \delta_{i}, y_{i})$ to construct the robust feature score vector.

\paragraph{Usage} We obtain adversarial perturbation $\delta$ by directly performing adversarial attack on $f_{\bm{w}}$ and $\bm{x}$. We subtract normalized $\frac{1}{N} \sum_{i=1}^{N}\mathrm{IG}_{f_{\bm{w}}}(\bm{x}_{i}, \bm{x}_{i} + \delta_{i}, y_{i})$ from 1 to make sure the higher score indicates more robustness. We then compute the average of each performance metric score and the IG score, to generate three robust features scores. We do not use IG score independently because it does not incorporate the loss function $\ell(f_{\bm{w}}; \bm{x}, y)$.

\section{Experimental Evaluation}

We conducted extensive experiments to show the effectiveness of our proposed framework on minimizing the 0-1 robust loss $\hat{\mathcal{L}}_{\epsilon-robust}^{0-1} (g_{\bm{c}})$ by optimizing $\bm{c}$. In other words, our framework improves the \textit{robustness}, which refers to the classification accuracy of an ML model on adversarial examples, while preserves the \textit{performance}, which refers to the classification accuracy of an ML model on benign examples. We report the detailed experiment setup and in Appendix~\ref{section:experiment_setup}.

\subsection{Attack Mode}
We focus on transferable adversarial attacks\cite{goodfellow2018defense}. The adversarial attack is performed by solving an optimization problem which is shown as follow

\begin{equation}
\label{gen_adv_samples}
\begin{split}
    \bm{x}_{adv} = \mathop{\arg\max}_{\bm{x}_{adv} \in \mathbb{B}(\bm{x}, \epsilon)} \ell(f_{\bm{w}}; \bm{x}_{adv}, y)
\end{split}
\end{equation}

Then, we have the adversarial perturbation $\delta = \bm{x}_{adv} - \bm{x}$. Transferable adversarial attacks assume the true $f_{\bm{w}}$ is not available for the adversary. Thus, the adversary generates adversarial examples using $f_{\bm{w'}}$, which is similar to $f_{\bm{w}}$. In this experiment, we use two ML models $f_{\bm{w}}$ and $f_{\bm{w'}}$, which are trained independently using the same model, data and hyper-parameters. Transferable attack works based on the observation that adversarial examples that fool one model are likely to
fool another~\cite{goodfellow2018defense}. More specifically, we employ the Fast Gradient Sign Method (FGSM) adversary and the Projected Gradient Descent (PGD)~\cite{madry2017towards} adversary to test our framework.

\subsection{Datasets}

We performed experiments on four real-world datasets: SpamBase, Isolet, MNIST, CIFAR, which contain text data, audio data, image data and image data, respectively. The detailed descriptions of the datasets and the data sample allocations are reported in Appendix~\ref{section:experiment_setup}.

\subsection{Evaluating the Effectiveness of Integrated Gradient in Identifying Non-robust Features}
\label{ig_eval}
Before evaluating the effectiveness of our framework, we first design an experiment to show minimizing $\mathop{\mathbb{E}}_{(\bm{x}, y) \sim D} \big[||\mathrm{IG}_{f_{\bm{w}}} (\bm{x}, \bm{x} + \delta, y)||\big]$ helps minimizing $\mathop{\mathbb{E}}_{(\bm{x}, y) \sim D} \big[\ell(f_{\bm{w}}; \bm{x} + \delta, y)\big]$. This experiment demonstrates the effectiveness of the robust measure, which is proposed in Section \ref{robust_metric}, in Robusta. The adversary uses $f_{\bm{w'}}$ to generate adversarial examples $\bm{X}_{adv}$, which is applied on $f_{\bm{w}}$.

\begin{figure}
\centering     
\subfigure[Highest IG Score]{\label{fig:ig_performance_highest}\includegraphics[width=.153\textwidth]{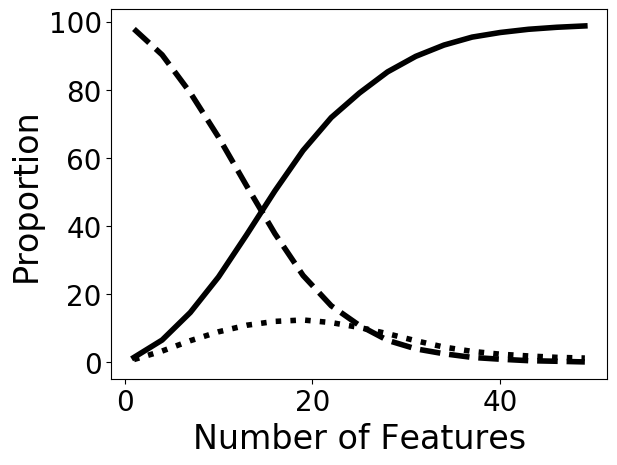}}
\subfigure[Random IG Score]{\label{fig:ig_performance_random}\includegraphics[width=.153\textwidth]{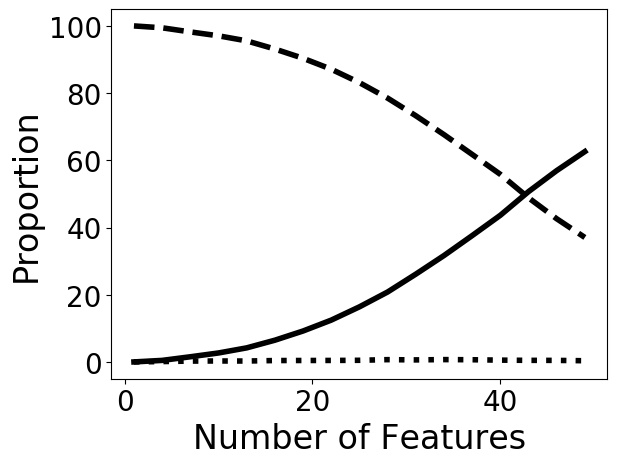}}
\subfigure[Lowest IG Score]{\label{fig:ig_performance_lowest}\includegraphics[width=.153\textwidth]{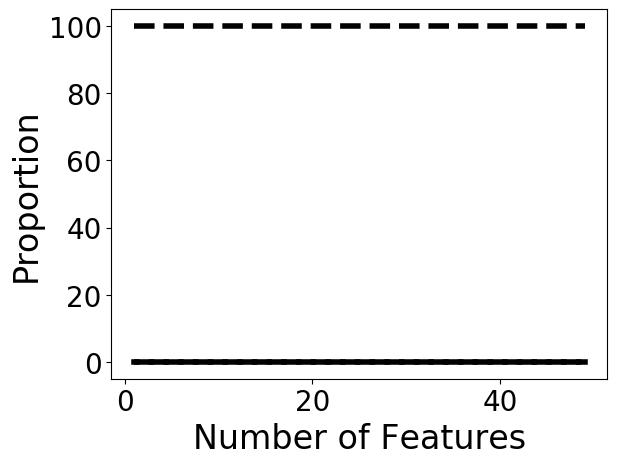}}
\caption{\textbf{Proportion of Adversarial Examples Becomes Benign After Top-$K$ Perturbations Removal}. The proportion of MNIST adversarial examples becomes benign~(solid line), the same adversarial example~(dash line), a new adversarial example~(dot line) by removing adversarial perturbations from a subset of features with highest, random and lowest IG scores, respectively. The same adversarial example refers to the adversarial example, after removing a subset of adversarial perturbations, that is assigned the same wrong label as before removing the perturbations. A new adversarial example refers to the adversarial example, after removing a subset of adversarial perturbations, that is assigned a different wrong label by the ML model.}
\label{fig:ig_performance}
\end{figure}

We reuse the notations $\bm{c}$ and $g_{\bm{c}}(\bm{x}_{0}, \bm{x}_{1})$ as element selection vector and element selection function, respectively. We define $\bm{x'} = g_{\bm{c}}(\bm{x}_{0}, \bm{x}_{1})$, where $x'^{j} = x_{0}^{j}$ if $c^{i} = 0$ and $x'^{j} = x_{1}^{j}$ if $c^{j} = 1$. With pre-computed adversarial examples and $\bm{IG} = \frac{1}{N} \sum_{i=1}^{N}\mathrm{IG}_{f_{\bm{w}}}(\bm{x}_{i}, \bm{x}_{i} + \delta, y_{i})$, we start to set elements in $\bm{c}$ to 1, which are initially set to 0. For a given $\bm{IG}$ and const number $K$, we set $c^{j}$ to 1 if $IG^{j}$ is among the top-$K$ largest elements in $\bm{IG}$. We want to see that, as $K$ increases, $\mathop{\mathbb{E}}_{(\bm{x}, y) \sim D} \big[\bm{1}_{\{f_{\bm{w}}(g_{\bm{c}_{K}}(\bm{x}, \bm{x} + \delta)) \neq y\}}\big]$ decreases. In other words, as more adversarial perturbation $\delta^{j}$ get removed from $\bm{x}$, more $\bm{x}_{adv}$ become benign. Obviously, lower $\mathop{\mathbb{E}}_{(\bm{x}, y) \sim D} \big[\bm{1}_{\{f_{\bm{w}}(g_{\bm{c}_{K}}(\bm{x}, \bm{x} + \delta)) \neq y\}}\big]$ implied lower $\mathop{\mathbb{E}}_{(\bm{x}, y) \sim D} \big[\ell(f_{\bm{w}}; \bm{x} + \delta, y)\big]$. In Figure~\ref{fig:ig_performance_highest}, we found that if the adversarial perturbation from the top 40 features with highest IG attribution is removed, $\sim$90\% adversarial examples become benign. In comparison, as Figure~\ref{fig:ig_performance_random} shows, if we randomly remove adversarial perturbation from the same amount of features, only $\sim$50\% adversarial examples become benign. Furthermore, if we remove the adversarial perturbation from the features with the least IG score, none of the adversarial examples became benign, as figure~\ref{fig:ig_performance_lowest} shows.

\subsection{Evaluating the Effectiveness of Robusta Framework}
\label{section:rl_eval}

In this section, we evaluate the effectiveness of Robusta by measuring the performance as well as the robustness of the features we selected. We also compare with the results obtained by Stable Feature Selection\footnote{\url{https://github.com/scikit-learn-contrib/stability-selection}}, LASSO and Concrete Auto-encoder (CAE) ~\cite{abid2019concrete}, which is a recently proposed SOTA method. The implementation details are reported in Appendix~\ref{section:experiment_setup}. In this experiment, we use two test sets which contain benign samples and pre-computed adversarial examples, respectively.

\paragraph{Effectiveness}
\begin{figure}
\centering     
\subfigure[Performance]{\label{fig:rl_performance}\includegraphics[width=.23\textwidth]{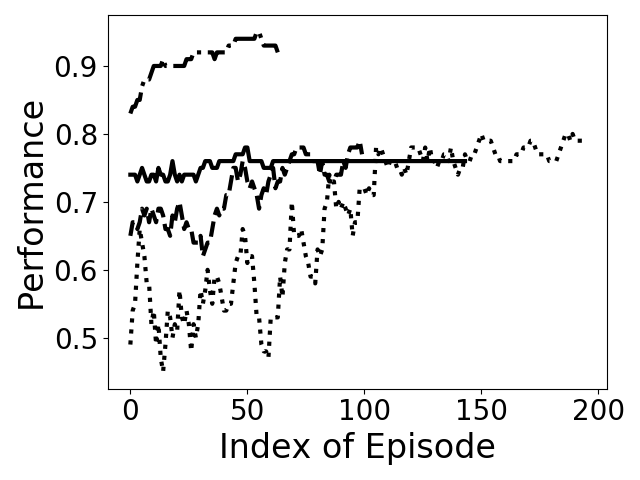}}
\subfigure[Robustness]{\label{fig:rl_robustness}\includegraphics[width=.23\textwidth]{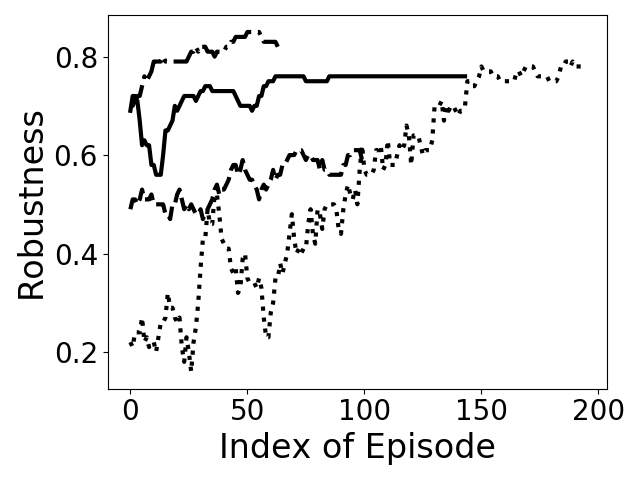}}
\caption{\textbf{Robusta Evaluation} Performance and robustness improvement on datasets SpamBase~(solid line), Isolet~(dash line), MNIST~(dash-dot line) and CIFAR10~(dot line) during training.}
\label{fig:rl_perf}
\end{figure}

We first show the performance and robustness of the select features on the test sets during RL training in Figure~\ref{fig:rl_perf}. We plot their average values over the consecutive 10 steps. The more episodes the RL agent gets trained, the better performance and robustness its selected features yield. The length of the lines varies because the steps to complete an episode differ. As we can see, the improvement varies from $5\%$ to $50\%$ across the different datasets. The amounts of fluctuation on SpamBase, Isolet and MNIST are smaller than CIFAR10. The reason is that we use the hidden representation, extracted by vanilla trained non-robust Resnet-18 and an adversarially trained robust Resnet-18, as features in CIFAR10. The differences between features are magnified by the robust and non-robust Resnet-18 networks. Thus, our CIFAR10 dataset has more diverse features. Besides, diversity also increases the difficulty of improving the performance and the robustness at the same time. As we can see from Figure~\ref{fig:rl_perf}, the line represents the CIFAR10 dataset starts from a very low value and grows slowly.







\begin{table}[!t]
\caption{Performance (accuracy on benign samples) of the ML Model using selected features}
\label{feature_performance}
\begin{center}
\begin{small}
\begin{threeparttable}
\begin{sc}
\begin{tabular}{lcccccc}
\toprule
Data set ($\epsilon$) & Stable & LASSO & Concrete & Robusta \\
\midrule
Spam ($8/255$) & \textbf{91.7} & 80.06\% & 80.36\% & 77.27\% \\

Isolet ($1/10$) & \textbf{91.7} & 76.65\% & 81.54\% & 81.99\% \\

MNIST ($1/10$) & / & 94.55\% & 97.21\% & 95.76\% \\

MNIST ($2/10$) & / & 94.54\% & 97.24\% & 95.71\% \\

MNIST ($3/10$) & / & 94.58\% & 97.22\% & 95.68\% \\

CIFAR ($8/255$) & / & 94.43\% & 94.44\% & 90.92\% \\
\bottomrule
\end{tabular}
\end{sc}
\begin{tablenotes}
\item [*]We bold the numbers if the best method outperforms all the others by 3\%.
\end{tablenotes}
\end{threeparttable}
\end{small}
\end{center}
\end{table}







\begin{table}[!t]
\caption{Robustness (accuracy on adversarial examples) of the ML model using selected features under PGD attack}
\label{feature_robustness_pgd}
\begin{center}
\begin{small}
\begin{threeparttable}
\begin{sc}
\begin{tabular}{lcccccc}
\toprule
Data set ($\epsilon$) & Stable & LASSO & Concrete & Robusta \\
\midrule
Spam ($8/255$) & 18.10\% & 55.36\% & 49.73\% & \textbf{68.03\%} \\

Isolet ($1/10$) & 25.98\% & 42.74\% & 24.13\% & \textbf{48.02\%} \\

MNIST ($1/10$) & / & 77.82\% & 77.93\% & \textbf{83.19\%} \\

MNIST ($2/10$) & / & 38.27\% & 27.10\% & \textbf{44.87\%}  \\

MNIST ($3/10$) & / & 14.14\% & 4.67\% & \textbf{18.11\%} \\

CIFAR ($8/255$) & / & 7.25\% & 14.29\% & \textbf{36.74\%} \\
\bottomrule
\end{tabular}
\end{sc}
\begin{tablenotes}
\item [*]We bold the numbers if the best method outperforms all the others by 3\%.
\end{tablenotes}
\end{threeparttable}
\end{small}
\end{center}
\end{table}


\begin{table}[!t]
\caption{Average accuracy on benign and adversarial examples of the ML model using selected features.}
\label{table:avg_acc}
\begin{center}
\begin{small}
\begin{threeparttable}
\begin{sc}
\begin{tabular}{lcccccr}
\toprule
Data set ($\epsilon$) & Stable & LASSO & Concrete & Robusta \\
\midrule
Spam($8/255$) & 54.90\% & 67.71\% & 65.05\% & \textbf{72.65\%} \\
Isolet ($1/10$) & 59.50\% & 59.70\% & 52.84\% & \textbf{65.01\%} \\
MNIST ($1/10$) & / & 41.29\% & 87.57\% & \textbf{89.48\%} \\
MNIST ($2/10$) & / & 35.55\% & 62.17\% & \textbf{70.29\%} \\
MNIS($3/10$) & / & 32.58\% & 50.95\% & \textbf{56.90\%} \\
CIFAR($8/255$)       & / & 50.84\% & 54.37\% & \textbf{63.83\%} \\
\bottomrule
\end{tabular}
\end{sc}
\begin{tablenotes}
\item [*]We bold the numbers if the best method outperforms all the others by 3\%.
\end{tablenotes}
\end{threeparttable}
\end{small}
\end{center}
\end{table}


\begin{table}[!t]
\caption{Trade-off ratio between performance and robustness of the ML model using selected features.}
\label{table:tradeoff_ratio}
\begin{center}
\begin{small}
\begin{threeparttable}
\begin{sc}
\begin{tabular}{lccccccr}
\toprule
Dataset ($\epsilon$) & Stable & LASSO & Concrete & Robusta \\
\midrule
Spam ($8/255$) & 5.07 & 1.45 & 1.62 & \textbf{1.13} \\
Isolet ($1/10$) & 3.58 & 1.79 & 3.38 & \textbf{1.71} \\
MNIST ($1/10$) & / & 1.21 & 1.24 & \textbf{1.15} \\
MNIST ($2/10$) & / & 2.47 & 3.60 & \textbf{2.13} \\
MNIST ($3/10$) & / & 6.68 & 20.82 & \textbf{5.28} \\
CIFAR ($8/255$) & / & 13.02 & 6.61 & \textbf{2.47} \\
\bottomrule
\end{tabular}
\end{sc}
\begin{tablenotes}
\item [*]The closer to 1.0, the better.
\end{tablenotes}
\end{threeparttable}
\end{small}
\end{center}
\end{table}

\paragraph{Performance and Robustness}
We then collect subsets of features, discovered by Robusta, to measure the performance and robustness of feature sets. To mitigate uncertainty, we run each method three times and collect the best features we find. Due to the limited space, we only report the average of the result in this section. We further report the variations at the end of Appendix. For Robusta, we collect the best subset of features we find throughout the RL training procedures. Since none of the competitors can decide the optimal amount of features, we manually explore the optimal number of features for each competitor. Finally, we collect \textit{around} 4, 40, 71, 30 features from each dataset, respectively. The details about deciding the number of selected features are described in Appendix~\ref{section:experiment_setup}. For each subset of features, we train a 2-layer neural network three times and report the mean of the performance and the robustness against PGD attacks in Tables~\ref{feature_performance},~\ref{feature_robustness_pgd}. We further report the robustness against FGSM attacks in Appendix~\ref{section:FGSM_robustness}. Since the stable feature selection algorithm did not converge to a result on MNIST and CIFAR10 dataset after 12 hours of running, we leave the corresponding entries blank. Our method always achieves the best robustness, whose improvement can be up to 22\% compared with the second-best reported robustness. Our framework also maintains competitive performance compared to other methods, which maintains reasonable robustness. However, the Stable feature selection delivers better performance but hurt the robustness a lot, makes it hard to compare and draw a conclusion. Thus, we report the average of performance and robustness in Table~\ref{table:avg_acc} and the trade-off ratio (the accuracy on benign samples divided by the accuracy on adversarial examples) between performance and robustness (against PGD attacks) in Table~\ref{table:tradeoff_ratio}. These two tables demonstrate that our method is always the best. With the experiment above, we show that our method yields more accuracy and makes better trade-off decisions.

\section{Related Work}

\paragraph{Automated Feature Engineering}
\citealt{khurana2018feature} and ~\citealt{liu2019automating} proposed reinforcement learning-based frameworks to efficiently explore the search space of feature transformations and subsets respectively. ~\citealt{abid2019concrete} selected feature via back-propagation by introducing Concrete Variable to an auto-encoder. However, the robustness against adversarial examples is generally ignored by these frameworks.

\paragraph{Robust Feature Engineering}
\citealt{xu2017feature} proposed a method to detect adversarial examples via feature squeezing and \citealt{tong2019improving} improved ML classifier robustness by identifying and utilizing conserved features. Nevertheless, their methods only worked for a specific type of task, which was image classification and malware detection, respectively, because they relied on domain-specific methods like Image quantization or security sandbox test. \citealt{ilyas2019adversarial} showed robust features generated by the adversarially trained neural network can improve ML model robustness. However, they didn't propose a systematical method to automatically select robust features. 

\section{Conclusion}
In this paper, we present \textit{Robusta}, an automated framework for robust feature selection, which can automatically search for a subset of informative and robust features. We leverage reinforcement learning to efficiently explore a combinatorial search space of feature sets. Also, we introduce an efficient metric, based on Integrated Gradient, to enable the RL agent to discover robust features. We design a reward function--based on a variation of the 0-1 robust loss~\cite{zhang2019theoretically}--to provide RL agent a comprehensive understanding of the quality of each feature to be selected. The experimental results show it is possible to improve ML model robustness by up to 22\% via simply selecting more robust features. Furthermore, we show that robustness improvement does not come at the expense of significant performance degradation.

\section*{Acknowledgements}
This research was funded by the National Science Foundation under the grant, NRT-HDR: Data and Informatics Graduate Intern-traineeship: Materials at the Atomic Scale (DIGI-MAT) \#1922758


\bibliography{ref}

\clearpage

\appendix

\section{Evaluating Existing Feature Scoring Metrics}
\label{appendix:metrics}
In Figure~\ref{fig:robustness_leakage}, we plot the variation of performance and robustness as more features are selected using existing feature scoring metrics. As we can see, as more features are selected, the performance improves but the robustness drops. Thus, the four existing score metrics fail to scoring the features w.r.t their robustness.

\begin{figure*}[!t]
\centering     
\subfigure[Random Selection]{\label{fig:feature_num_rand}\includegraphics[width=.238\textwidth]{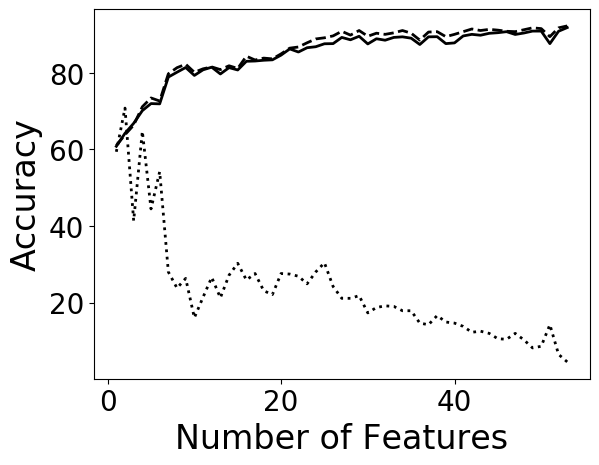}}
\subfigure[Mutual Information]{\label{fig:feature_num_mi}\includegraphics[width=.238\textwidth]{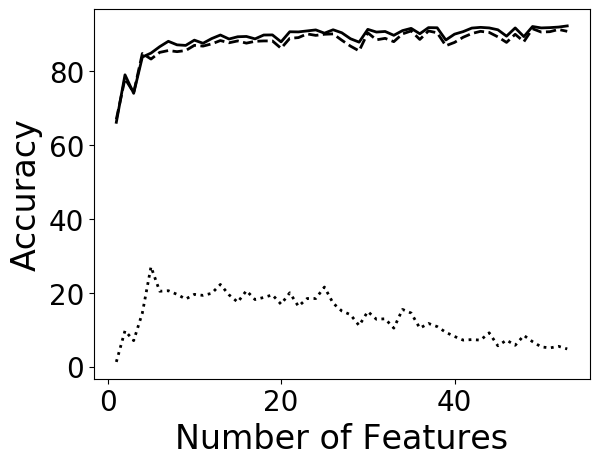}}
\subfigure[Tree Score]{\label{fig:feature_num_tree}\includegraphics[width=.238\textwidth]{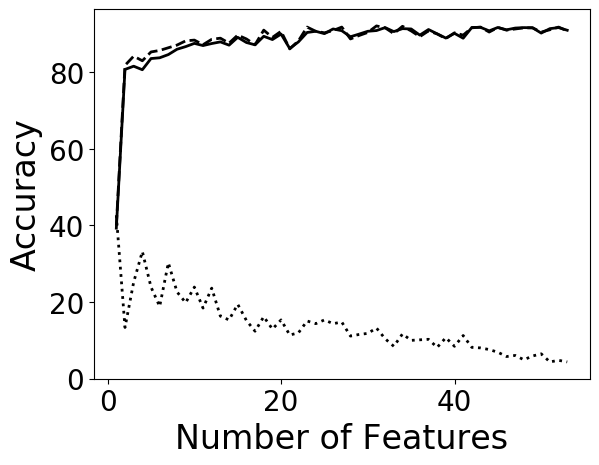}}
\subfigure[F Score]{\label{fig:feature_num_f}\includegraphics[width=.238\textwidth]{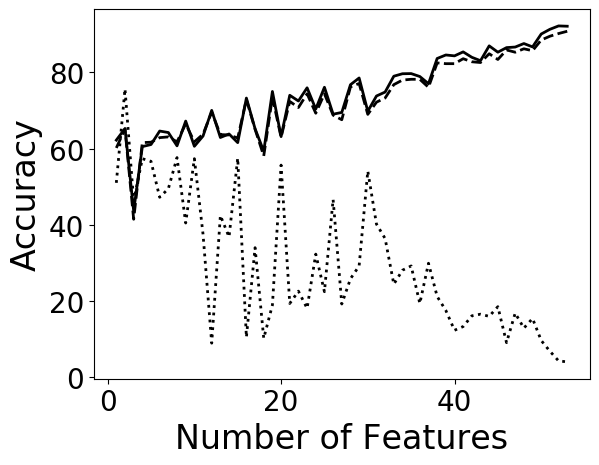}}
\caption{
\textbf{Existing Metrics Fail to Identify Robust Features on MNIST Dataset}. The performance on training set~(solid line) and test set~(dash line) increase while the robustness (dot line) decreases.}
\label{fig:robustness_leakage}
\end{figure*}

\section{Proof of Theorem 3.1}
\label{proof:theorem3.1}
\begin{theorem}
    Let us assume $\mathcal{E} = (\mathcal{S}, \mathcal{A}, \gamma, R)$ and $\mathcal{E'} = (\mathcal{S}, \mathcal{A}, \gamma, R')$ are two identical environments except reward function. If a reward function $R: \mathcal{S} \times \mathcal{A} \times \mathcal{S} \xrightarrow{} \mathbb{R}$ and a reward shaping function $F: \mathcal{S} \times \mathcal{A} \times \mathcal{S} \xrightarrow{} \mathbb{R}$ satisfy $\sum_{t=1}^{T} F(s_{t}, \pi(s_{t}), s_{t+1}) = \sum_{t=1}^{T} R(s_{t}, \pi(s_{t}), s_{t+1}) = R(s_{T}, \pi(s_{T}), s_{T+1})$ for any policy $\pi$ and any sequence length $T$, 
    then, we have $\pi_{\mathcal{E}}^{*} = \pi_{\mathcal{E'}}^{*}$ when $\gamma = 1$.
\end{theorem}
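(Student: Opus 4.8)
The plan is to show that the two environments have value functions that are positive multiples of one another, so that their optimal policies must coincide. Recall from Section~\ref{section:reward} that the shaped reward is $R'(s,a) = R(s,a) + F(s,a)$, and note that every episode here is finite: under any policy the agent reaches a terminal state after finitely many steps (finitely many features, plus the terminate condition of Section~\ref{section:implementation_details}). Hence with $\gamma = 1$ the return along an episode $s_1 \to s_2 \to \cdots \to s_{T+1}$, with $s_{T+1}$ terminal, is just the undiscounted sum $\sum_{t=1}^{T} R(s_t,\pi(s_t),s_{t+1})$ in $\mathcal{E}$, and $\sum_{t=1}^{T} R'(s_t,\pi(s_t),s_{t+1})$ in $\mathcal{E'}$.

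First I would fix an arbitrary policy $\pi$ and an arbitrary state $s$, and expand $V^{\pi}_{\mathcal{E'}}(s)$ using $R' = R + F$ and linearity of expectation over the (possibly random) trajectory started at $s$: $V^{\pi}_{\mathcal{E'}}(s) = \mathbb{E}_{\pi}\big[\sum_{t=1}^{T} R(s_t,\pi(s_t),s_{t+1})\big] + \mathbb{E}_{\pi}\big[\sum_{t=1}^{T} F(s_t,\pi(s_t),s_{t+1})\big]$. The hypothesis asserts that, pathwise on every complete episode, $\sum_{t=1}^{T} F(s_t,\pi(s_t),s_{t+1}) = \sum_{t=1}^{T} R(s_t,\pi(s_t),s_{t+1})$ (both equal the lone nonzero terminal reward $R(s_T,\pi(s_T),s_{T+1})$, since $R$ vanishes off terminal transitions). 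Substituting, the two expectations are identical, so $V^{\pi}_{\mathcal{E'}}(s) = 2\,V^{\pi}_{\mathcal{E}}(s)$ for every $\pi$ and every $s$.

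It then remains to conclude the policy statement: since $V^{\pi}_{\mathcal{E'}}(\cdot) = 2\,V^{\pi}_{\mathcal{E}}(\cdot)$ with $2 > 0$, maximizing the value over policies is the identical optimization in both environments, simultaneously at all states; invoking the standard fact that a finite MDP admits a stationary policy optimal at every state, the sets of optimal policies for $\mathcal{E}$ and $\mathcal{E'}$ coincide, i.e.\ $\pi^{*}_{\mathcal{E}} = \pi^{*}_{\mathcal{E'}}$. The main obstacle I expect is not any difficult estimate but rather careful episodic bookkeeping: I must make the sums range over a complete episode ending at $s_{T+1}$ (so that ``$\sum_t R = R(s_T,\cdot)$'' is precisely the statement that $R$ is supported on the terminal transition), and I must apply the hypothesis to trajectories started from an arbitrary state, not only from $s_0$ --- this is exactly what upgrades ``equal value at $s_0$'' to ``same optimal policy everywhere.'' It is also worth making explicit where $\gamma=1$ enters: for $\gamma<1$ the weights $\gamma^{t-1}$ would not cancel between $\sum_t\gamma^{t-1}F$ and $\sum_t\gamma^{t-1}R$, so the proportionality --- and hence the whole argument --- would break down.
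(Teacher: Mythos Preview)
Your proposal is correct and follows essentially the same route as the paper: both arguments observe that, with $\gamma=1$ and the stated hypothesis, the return in $\mathcal{E'}$ along any complete episode equals twice the return in $\mathcal{E}$, so the value functions satisfy $V^{\pi}_{\mathcal{E'}} = 2\,V^{\pi}_{\mathcal{E}}$ and hence the maximizers over $\pi$ coincide. The paper phrases the last step as a short contradiction at the initial state $s_0$, whereas you argue directly and at every state; your version is slightly more careful (explicit expectations, arbitrary start state), but the core idea is identical.
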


\begin{proof}
For any policy $\pi_{\mathcal{E}}$ in $\mathcal{E}$ and discount factor $\gamma = 1$. The value function is
\begin{equation}
\begin{split}
   V^{\pi_{\mathcal{E}}}(s_{0}) &= \sum_{t=1}^{T} \gamma^{t}R(s_{t}, \pi_{\mathcal{E}}(s_{t}), s_{t+1}) \\
   &= R(s_{T}^{\pi_{\mathcal{E}}}, \pi_{\mathcal{E}}(s_{T}), s_{T+1}) \\
   &= 1 - \hat{\mathcal{L}}_{\epsilon-robust}^{0-1} (g_{\bm{c}_{\pi_{\mathcal{E}}}})
\end{split}
\end{equation}

Similarly, we have the value function for policy $\pi_{\mathcal{E'}}$ in $\mathcal{E'}$:
\begin{equation}
    \begin{split}
        V^{\pi_{\mathcal{E'}}}(s_{0}) &= \sum_{t=1}^{T} R(s_{i}, \pi_{\mathcal{E'}}(s_{t}), s_{t+1}) \\
        &\ \ \ \ + \sum_{t=1}^{T} F(s_{t}, \pi_{\mathcal{E'}}(s_{t}), s_{t+1}) \\
        &= 2 \times R(s_{T}^{\pi_{\mathcal{E'}}}, \pi_{\mathcal{E'}}(s_{T}^{\pi_{\mathcal{E'}}}), s_{T+1}^{\pi_{\mathcal{E'}}})
    \end{split}
\end{equation}

Assume we have two distinct optimal policies $\pi_{\mathcal{E}}^{*}$ and $\pi_{\mathcal{E'}}^{*}$ for $\mathcal{E}$ and $\mathcal{E'}$, respectively. Since maximizing $2 \times R(s_{T}^{\pi_{\mathcal{E'}}}, \pi_{\mathcal{E'}}(s_{T}^{\pi_{\mathcal{E'}}}), s_{T+1}^{\pi_{\mathcal{E'}}})$ is equivalent to maximize $R(s_{T}^{\pi_{\mathcal{E'}}}, \pi_{\mathcal{E'}}(s_{T}^{\pi_{\mathcal{E'}}}), s_{T+1}^{\pi_{\mathcal{E'}}})$. Thus, we have $V^{\pi_{\mathcal{E'}}^{*}}(s_{0}) = R(s_{T}^{\pi_{\mathcal{E'}}^{*}}, \pi_{\mathcal{E'}}^{*}(s_{T}^{\pi_{\mathcal{E'}}^{*}}), s_{T+1}^{\pi_{\mathcal{E'}}^{*}}) > R(s_{T}^{\pi_{\mathcal{E}}^{*}}, \pi_{\mathcal{E}}^{*}(s_{T}^{\pi_{\mathcal{E}}^{*}}), s_{T+1}^{\pi_{\mathcal{E}}^{*}}) = V^{\pi_{\mathcal{E}}^{*}}(s_{0})$. However, by definition, we have $V^{\pi_{\mathcal{E}}^{*}}(s_{0}) > V^{\pi_{\mathcal{E}}}(s_{0})$ for any given $\pi_{\mathcal{E}} \in \Pi_{\mathcal{E}} - \pi_{\mathcal{E}}^{*}$. As $\mathcal{E}$ and $\mathcal{E'}$ are only different on the reward function, we have $\Pi_{\mathcal{E}} = \Pi_{\mathcal{E'}}$. Thus, we have $V^{\pi_{\mathcal{E}}^{*}}(s_{0}) > V^{\pi_{\mathcal{E'}}}(s_{0})$ for any given $\pi_{\mathcal{E'}} \in \Pi_{\mathcal{E'}} - \pi_{\mathcal{E}}^{*}$. Since $V^{\pi_{\mathcal{E'}}^{*}}(s_{0}) > V^{\pi_{\mathcal{E}}^{*}}(s_{0})$ conflicts $V^{\pi_{\mathcal{E}}^{*}}(s_{0}) > V^{\pi_{\mathcal{E'}}}(s_{0})$, we must have $\pi_{\mathcal{E}}^{*} = \pi_{\mathcal{E'}}^{*}$.
\end{proof}

\section{Theorem for Potential-based Reward Shaping Function}
Another way of proving $\pi_{\mathcal{E}}^{*} = \pi_{\mathcal{E'}}^{*}$ is to leverage the previous result of potential-based reward shaping function and use the following theorem
\begin{theorem}
    (Theorem 1 in \citealt{ng1999policy})
    Let any $\mathcal{S}$, $\mathcal{A}$, $\gamma$ and any shaping reward function $F: \mathcal{S} \times \mathcal{A} \times \mathcal{S} \xrightarrow{} \mathbb{R}$ be given. We say F is a potential-based shaping function such that for all $s \in \mathcal{S} - \{s_{0}\}$, $a \in \mathcal{A}$, $s' \in \mathcal{S}$,
    \begin{equation}
    \label{equation:reward_shaping}
        F(s, a, s') = \gamma\Phi(s') - \Phi(s),
    \end{equation}
    (where $\mathcal{S} - \{s_{0}\} = \mathcal{S}$ if $\gamma < 1$) Then, that F is a potential-based shaping function is a necessary and sufficient condition for it to guarantee consistence with the optimal policy (when learning from $\mathcal{E'} = (\mathcal{S}, \mathcal{A}, T, \gamma, R + F)$ rather than from $\mathcal{E} = (\mathcal{S}, \mathcal{A}, T, \gamma, R)$) if the following sense: 
    \begin{itemize}
    \itemsep 0in
    \item (Sufficiency) If F is a potential-based shaping function, then every optimal policy in $\mathcal{E'}$ will also be an optimal policy in $\mathcal{E}$.
    \item (Necessity) If F is not a potential-based shaping function (e.g. no such $\Phi$ exists satisfying Equation~\eqref{equation:reward_shaping}), then there exist (proper) transition function $T$ and a reward function $R$, such that no optimal policy in $\mathcal{E'}$ is optimal in $\mathcal{E}$. 
    \end{itemize}
\end{theorem}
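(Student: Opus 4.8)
The plan is to prove the two directions by different devices: an exact optimal-value-function identity for sufficiency, and explicitly constructed counterexample MDPs for necessity. Routing everything through the optimal action-value function (rather than through the returns of fixed trajectories, as in the special-case Theorem~\ref{theorem:3.1}) keeps the policy-set conclusions clean.

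For sufficiency I would first guess that shaping merely shifts the optimal $Q$-function by the potential, and then verify this directly. Let $Q_{\mathcal{E}}^{*}$ solve $Q_{\mathcal{E}}^{*}(s,a) = \mathbb{E}_{s'}[R(s,a,s') + \gamma\max_{a'}Q_{\mathcal{E}}^{*}(s',a')]$ and set $\hat{Q}(s,a) := Q_{\mathcal{E}}^{*}(s,a) - \Phi(s)$. Substituting the shaped reward $R+F = R + \gamma\Phi(s') - \Phi(s)$ into the Bellman optimality operator of $\mathcal{E'}$ and using $\max_{a'}(Q_{\mathcal{E}}^{*}(s',a') - \Phi(s')) = \max_{a'}Q_{\mathcal{E}}^{*}(s',a') - \Phi(s')$ (valid because $\Phi(s')$ does not depend on $a'$), the $\gamma\Phi(s')$ terms cancel and $\hat{Q}$ is seen to satisfy the Bellman optimality equation of $\mathcal{E'}$. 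Since that operator is a contraction for $\gamma<1$ (and has a unique solution for proper episodic MDPs when $\gamma=1$, with $\Phi$ taken to vanish on the terminal/absorbing state), uniqueness of the fixed point yields $Q_{\mathcal{E'}}^{*} = Q_{\mathcal{E}}^{*} - \Phi$. Because $\Phi(s)$ is a constant offset in $a$, $\mathop{\arg\max}_{a}Q_{\mathcal{E'}}^{*}(s,a) = \mathop{\arg\max}_{a}Q_{\mathcal{E}}^{*}(s,a)$ for every $s$, so the optimal-policy sets of $\mathcal{E'}$ and $\mathcal{E}$ coincide; in particular every $\mathcal{E'}$-optimal policy is $\mathcal{E}$-optimal.

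For necessity I would argue constructively from the failure of the potential condition. Since $F(s,a,s') = \gamma\Phi(s') - \Phi(s)$ forces $F$ to be independent of $a$, a non-potential $F$ falls into one of two cases. In case (i), $F$ genuinely depends on the action: pick $\hat{s},a_{1},a_{2},\hat{s}'$ with $\Delta := F(\hat{s},a_{1},\hat{s}') - F(\hat{s},a_{2},\hat{s}') > 0$ and build a deterministic MDP in which every action sends $\hat{s}$ to an absorbing state $\hat{s}'$ carrying zero $\mathcal{E}$-reward. The best action at $\hat{s}$ then maximizes $R(\hat{s},\cdot,\hat{s}')$ in $\mathcal{E}$ and $R(\hat{s},\cdot,\hat{s}') + F(\hat{s},\cdot,\hat{s}')$ in $\mathcal{E'}$ (the common tail value at $\hat{s}'$ cancels); choosing the rewards so that $R(\hat{s},a_{2},\hat{s}') - R(\hat{s},a_{1},\hat{s}') \in (0,\Delta)$ (and all other actions made strictly worse) makes $a_{2}$ the unique $\mathcal{E}$-optimal and $a_{1}$ the unique $\mathcal{E'}$-optimal action, so no $\mathcal{E'}$-optimal policy is $\mathcal{E}$-optimal. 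Case (ii), where $F$ is action-independent yet no $\Phi$ solves $F(s,s')=\gamma\Phi(s')-\Phi(s)$, is the conceptually delicate one and I expect it to be the main obstacle. The correct invariant is \emph{not} ``cycles sum to zero'' (which fails once $\gamma\neq 1$), but rather that a telescoping $F$ assigns to any path $u_{0}\to\cdots\to u_{n}$ the discounted sum $\gamma^{n}\Phi(u_{n}) - \Phi(u_{0})$, so any two equal-length paths sharing both endpoints receive equal discounted $F$-sum. Because the shaping condition is imposed on all of $\mathcal{S}\times\mathcal{A}\times\mathcal{S}$, the transition graph is effectively complete, and a short computation shows unsolvability already forces a length-two witness: states with $F(\hat{s},t_{1})+\gamma F(t_{1},\hat{s}'') \neq F(\hat{s},t_{2})+\gamma F(t_{2},\hat{s}'')$, with positive gap $\delta$. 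I would then let action $a_{i}$ at $\hat{s}$ enter $t_{i}$, route both $t_{1},t_{2}$ into a common absorbing (or terminal) $\hat{s}''$, and place a single bonus $\rho\in(0,\delta)$ on the $t_{2}$-route with all else zero; the shared tail cancels, so $\mathcal{E}$ strictly prefers $a_{2}$ (return $\rho>0$ versus $0$) while $\mathcal{E'}$ strictly prefers $a_{1}$ (its shaping advantage $\delta$ exceeds $\rho$), again breaking consistency. The points demanding care are the reduction to a length-two witness, verifying the tail genuinely cancels for both $\gamma<1$ and proper $\gamma=1$ MDPs, and the tie-breaking window $\rho\in(0,\delta)$; by contrast the sufficiency direction and case (i) are essentially bookkeeping.
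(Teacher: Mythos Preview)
The paper does not prove this statement at all: it merely quotes it verbatim as Theorem~1 of Ng, Harada, and Russell (1999) and presents it as an alternative route to the conclusion of Theorem~\ref{theorem:3.1}. There is therefore no ``paper's own proof'' to compare against; your proposal supplies strictly more than the paper does.

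On the substance of what you wrote: your sufficiency argument is the standard Ng--Harada--Russell one and is correct --- verifying that $Q_{\mathcal{E}}^{*}-\Phi$ is a Bellman-optimality fixed point in $\mathcal{E}'$ and invoking uniqueness is exactly how the original paper proceeds. Your necessity argument is more elaborate than the original (which builds a single small counterexample rather than splitting into action-dependent and action-independent cases), but the logic is sound. In particular your ``length-two witness'' reduction in case~(ii) does go through for $\gamma<1$: assuming no such witness exists, one checks that $\Phi(s):=F(s,s)/(\gamma-1)$ satisfies the potential identity, so nonexistence of $\Phi$ forces a witness. The points you flagged as delicate --- tail cancellation in the proper $\gamma=1$ case with $\Phi$ vanishing on the absorbing state, and the strict tie-breaking window $\rho\in(0,\delta)$ --- are indeed the places requiring care, but nothing in your outline is wrong.
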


\section{Intuitive Explanation of IG score}
\label{section:intuition_IG}
We show that Integrated Gradient (IG) is an effective method to estimate the robustness of features by measuring the local sharpness along each dimension represented by each feature. The sharpness is a good indicator of robustness because the sharp gradient is a major cause of adversarial examples. Let us consider a simple case where a dataset $D$ has two features $0, 1$ and one benign sample, $\{\bm{x} \mid \bm{x} \in \mathbb{R}^{2}\}$. After adding adversarial noise $\bm{\delta}$, we have an adversarial example $\bm{x}_{adv} = \bm{x} + \bm{\delta}$. We further assume the gradient is near $0$ in $[\bm{x}^{0}, \bm{x}_{adv}^{0}]$ while the gradient is $c$ in $[\bm{x}^{1}, \bm{x}_{adv}^{1}]$. Here, feature $0$ represents the direction of $[\bm{x}^{0}, \bm{x}_{adv}^{0}]$ and feature $1$ represents the direction of $[\bm{x}^{1}, \bm{x}_{adv}^{1}]$. Formally, we have $\frac{\partial f_{\bm{w}}^{y}(\bm{x})}{\partial \bm{x}^{0}} \approx 0$ and $\frac{\partial f_{\bm{w}}^{y}(\bm{x})}{\partial \bm{x}^{1}} = c$. Substituting the gradient in equation~\eqref{ig} with $\frac{\partial f_{\bm{w}}^{y}(\bm{x})}{\partial \bm{x}^{0}}$ and $\frac{\partial f_{\bm{w}}^{y}(x)}{\partial \bm{x}^{1}}$, we have $\bm{IG}^{0}(\bm{x}) \approx 0$ and $\bm{IG}^{1}(\bm{x}) \approx c\times(\bm{x}^{1}-\bm{x}_{adv}^{1})$. Obviously, feature $1$ will receive more attribution than feature $0$. On the other hand, it is indeed the sharper gradient $c$ along feature $1$ together with the perturbation $(\bm{x}^{1}-\bm{x}_{adv}^{1})$ makes $\bm{x}_{adv}$ an adversarial example.

\section{RL Actions}
\label{section:rl_actions}
We report the actions of the RL agent in a single episode in Table~\ref{table:rl_action}, from which we choose the subset of features for further evaluation. As Table~\ref{table:rl_action} shows, the RL agent relies on different metrics to achieve the objective across the dataset, which demonstrates the RL agent is capable for identifying the proper way of ensembling the feature scoring metrics. Table~\ref{table:rl_action} also suggests that the learned RL agent is not transferable between the datasets. It is because the 4 datasets we are using are quite different from each other. The AutoML transferability over grouped datasets or continuously changing datasets has been studied on related works \cite{liu2019automating, xue2019transferable}.

\begin{table}[t]
\caption{Action Counts of RL Agent}
\label{table:rl_action}
\begin{center}
\begin{small}
\begin{threeparttable}
\begin{sc}
\begin{tabular}{lccccccr}
\toprule
Dataset & MI & Tree & F & MI\textsubscript{IG} & Tree\textsubscript{IG} & F\textsubscript{IG} \\
\midrule
SpamBase    & 0 & 0 & 2 & 1 & 2 & 0 \\
Isolet & 8 & 28 & 1 & 2 & 3 & 0 \\
MNIST    & 2 & 20 & 0 & 0 & 49 & 0 \\
CIFAR    & 6 & 1 & 12 & 2 & 3 & 17 \\
\bottomrule
\end{tabular}
\end{sc}
\begin{tablenotes}
\item [*] The metrics with IG subscripts denote the combined metrics defined in Section~\ref{robust_metric}.
\end{tablenotes}
\end{threeparttable}
\end{small}
\end{center}
\end{table}

\section{Experiment Setup and Details}
\label{section:experiment_setup}
\paragraph{Experimental Environment} We performed our evaluation on a single machine with an Nvidia RTX 2070 GPU and an Intel Core i9 CPU as well as 32GB memory and 1TB hard disk.

\paragraph{ML Model} We used a 2-layer fully connected neural network $f_{\bm{w}}$ with a hidden layer of size 300 as the ML model. The other dimensions of $\bm{w}$ are determined by the feature selection vector $\bm{c}$ and the label space $\mathcal{Y}$.

\paragraph{Q-Network} We used a 2-layer fully connected neural network as the Q-network $Q_{\theta}$, whose input size was 15, the hidden size was 128, output size was 6.

\paragraph{Optimizer} We used a default Adam optimizer with $\beta1=0.9$, $\beta2=0.999$ through out the experiment.

\paragraph{RL Agent} We tried [2000, 3000, 4000] for number of steps and [1000, 2000, 3000] for the $\epsilon$-decay ($\epsilon$ decays over [1000, 2000, 3000] steps). The other settings are the same as \cite{mnih2013playing}. Finally, we trained our RL model for 4000 steps with learning rate = 0.01, batch size = 64, discount factor $\gamma$ = 1 and $\epsilon$ = 0.1 ($\epsilon$-greedy). The training started after the first 100 steps and the target network updated every 1000 steps.

\paragraph{Computational Cost}
The framework need 1-4 hours to complete, depends on the size of the dataset, which is slower than stable feature selection on small dataset like SpamBase and Isolet. However, it's faster on MNIST and CIFAR.

\paragraph{Choose $\epsilon$ for Adversary}
We start the adversary attack with $\epsilon = 8/255$. On Isolet and MNIST, $\epsilon = 8/255$ is not sufficient to significantly decrease the accuracy. Thus, we start again with $\epsilon = 1/10$ and gradually increase it by $1/10$

\paragraph{Select the Number of Features}
The optimal number of features for each method is close to the number of features that Robusta chooses. As is shown in Figure~\ref{fig:num_feature}, LASSO achieves good performance and robustness with 71 features on MNIST and 4 features on LASSO. Thus, to decide the optimal number of features for each method, we simply perform grid search, with granularity 5, around the number of features that Robusta chooses. For stable feature selection, we tried to tune the hyper-parameter but the algorithm was unable to return a small set of selected features. Thus, we base the evaluation on the smallest set of features we can get, which leads to maximum robustness. The full details are reported in Table~\ref{table:num_feature}.

\begin{table*}[!t]
\caption{ML Model Hyper-parameters for Feature Evaluation}
\label{table:num_feature}
\begin{center}
\begin{small}
\begin{sc}
\begin{tabular}{lccccc}
\toprule
Data set & Stable & Mutual Info & LASSO & Concrete & Robusta \\
\midrule
SpamBase & 22 & 4 & 4 & 4 & 4 \\
Isolet & 61 & 40 & 45 & 50  & 40 \\
MNIST & / & 71 & 71 & 66 & 71 \\
CIFAR & / & 30 & 25 & 30 & 30 \\
\bottomrule
\end{tabular}
\end{sc}
\end{small}
\end{center}
\end{table*}

\begin{figure*}
\centering     
\subfigure[]{\includegraphics[width=.33\textwidth]{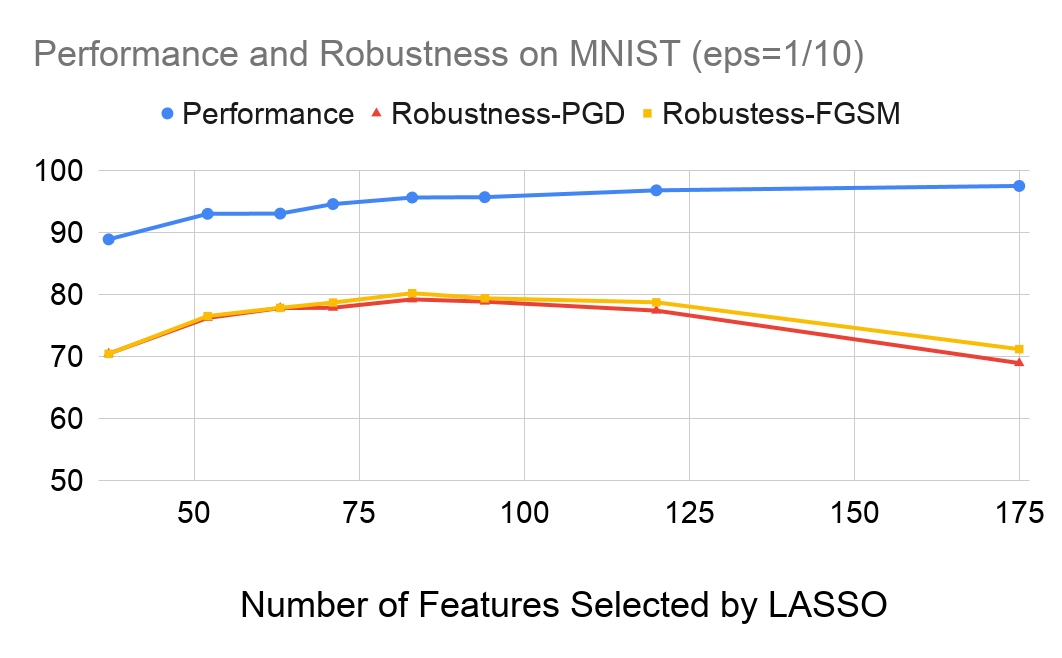}}
\subfigure[]{\includegraphics[width=.33\textwidth]{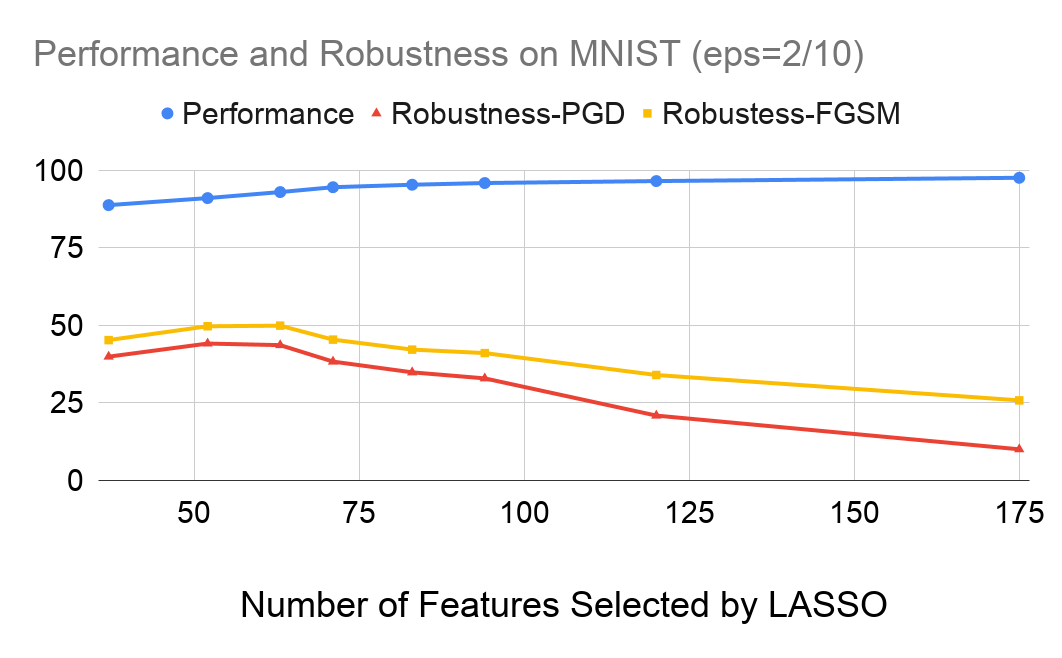}}
\subfigure[]{\includegraphics[width=.33\textwidth]{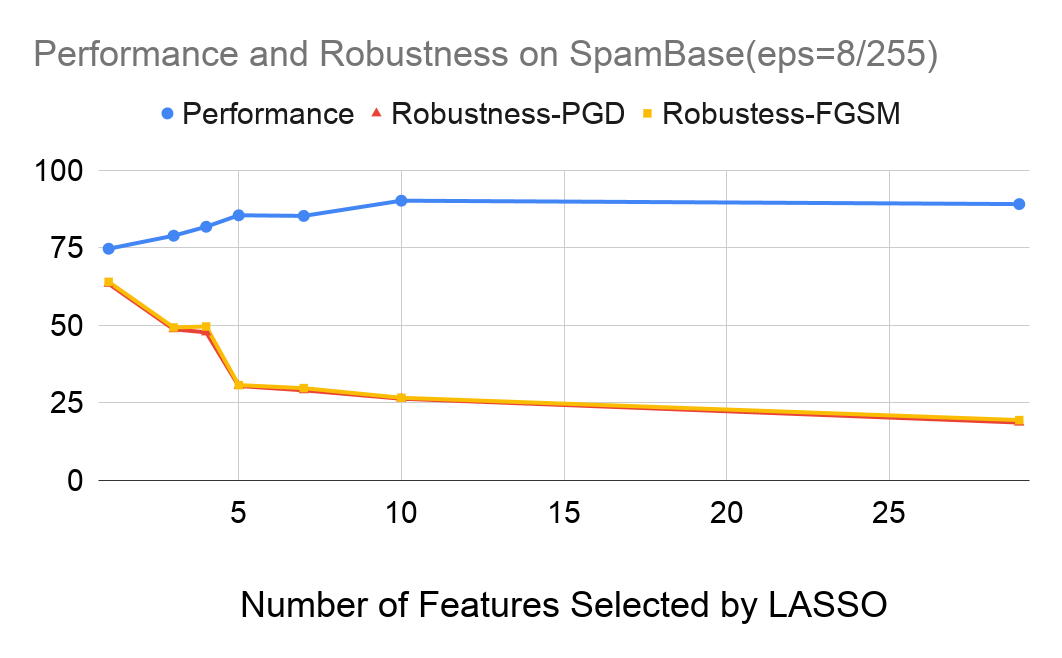}}
\caption{Performance and Robustness with Different Number of Features}
\label{fig:num_feature}
\end{figure*}

\paragraph{Feature Scoring Metric Computation}
All the feature scores can be directly calculated by the Scikit-learn\footnote{\url{https://scikit-learn.org/stable/modules/generated/sklearn.feature_selection.mutual_info_classif.html#sklearn.feature_selection.mutual_info_classif}}\footnote{\url{https://scikit-learn.org/stable/modules/generated/sklearn.ensemble.ExtraTreesClassifier.html}}\footnote{\url{https://scikit-learn.org/stable/modules/generated/sklearn.feature_selection.f_classif.html#sklearn.feature_selection.f_classif}} and Captum\footnote{\url{https://captum.ai/docs/extension/integrated_gradients}} libraries.

\paragraph{IG Reuse}
Computing the IG attribution is expensive, especially when we need to compute the IG attribution at each step in RL. To alleviate this problem, we only compute one IG attribution for an ML model $f_{\bm{w}}$, which is trained with full features, and reuse it at all the steps. Using the same IG attribution across all steps also yield good result. The reason is that the robustness of \textit{features} does not heavily depend on the ML model \textit{parameters} as~\cite{ilyas2019adversarial} shows. Figure~\ref{fig:ig_heatmap_different_net} visualizes the similar IG attributions for three neural networks with different parameters and attack methods.

\begin{figure}[!t]
\centering     
\subfigure[Net 0, PGD]{\label{fig:ig_heatmap_010}\includegraphics[width=.153\textwidth]{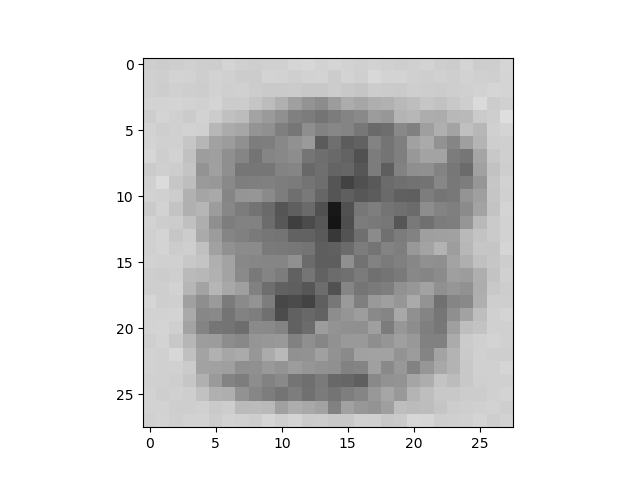}}
\subfigure[Net 1, PGD]{\label{fig:ig_heatmap_011}\includegraphics[width=.153\textwidth]{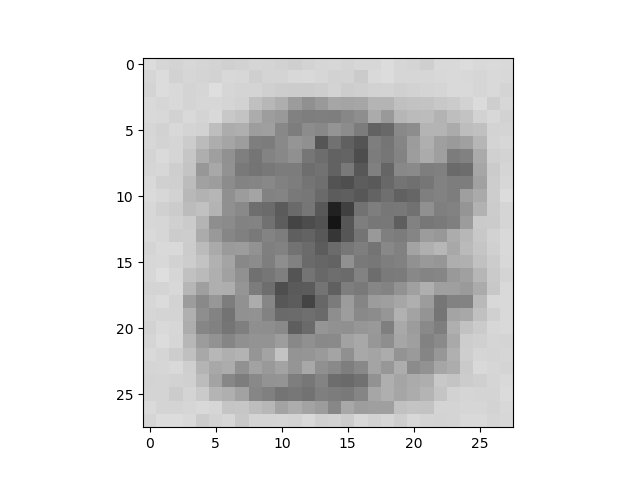}}
\subfigure[Net 1, FGSM]{\label{fig:ig_heatmap_111}\includegraphics[width=.153\textwidth]{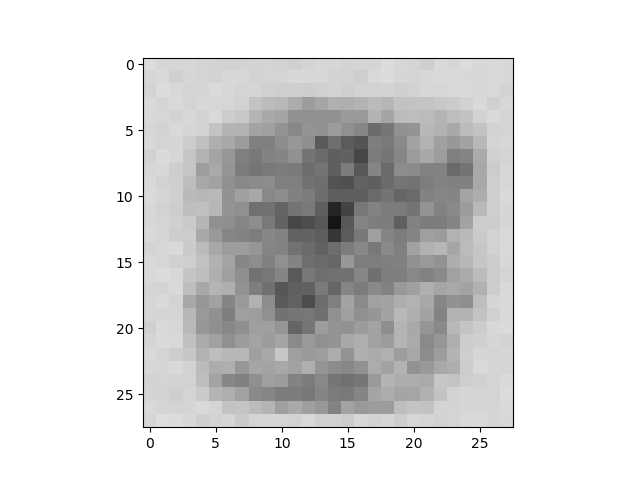}}
\caption{\textbf{Heat-Map of Feature Attribution from Two Neural Networks with Different Parameters under Different Attacks on MNIST Dataset} The perturbation size $\epsilon$ is  0.1.}
\label{fig:ig_heatmap_different_net}
\end{figure}

\paragraph{Feature Evaluation}
We trained a 2-layer neural network on each of the feature sets in order to evaluate the quality of the selected features. We report the training hyper-parameters in Table~\ref{hyper_parameters}. For the learning rate, we performed a grid search over [0.01, 0.1] with step size 0.01. For the training epoch, we performed a grid search over [10, 50] with step size 10. We selected the hyper-parameters where the performance on training and validation set didn't diverse much.

\begin{table}[!t]
\caption{ML Model Hyper-parameters for Feature Evaluation}
\label{hyper_parameters}
\begin{center}
\begin{small}
\begin{sc}
\begin{tabular}{lccr}
\toprule
Data set & Learning Rate & Epoch \\
\midrule
SpamBase & 0.05 & 30 \\
Isolet & 0.01 & 20 \\
MNIST & 0.05 & 20 \\
CIFAR & 0.05 & 10 \\
\bottomrule
\end{tabular}
\end{sc}
\end{small}
\end{center}
\end{table}

\paragraph{Datasets}
We performed experiments on four real-world datasets which contain tabular data, acoustic data, and image data:

\textit{SpamBase}: consists of 5000 spam and non-spam e-mails. 
The e-mails are represented in a tabular way using word and character frequencies, so the features are corresponding continuous values. It has 57 continuous features in total. But we only select 54 of them because the values of the remaining 3 are unbounded, while the values of the former 54 features are bounded between 0 and 100. We further scale the features to $[0, 1]$. We choose this dataset as spam filtering is a common task where attackers can get benefits.

\textit{Isolet}: consists of 5000 pre-processed speech data samples of people speaking the names of the letters in the English alphabet. This dataset is widely used as a benchmark in the feature selection literature. Each feature is one of the 617 quantities in the range $[0, 1]$. We choose this dataset because acoustic data is vulnerable to adversarial attacks.

\textit{MNIST}: consists of 60,000 training and 10,000 test images, which are 28-by-28 gray-scale images of hand-written digits. We choose these datasets because they are widely known in the machine learning community. Although there are other image datasets, the objects in each image in MNIST are centered, which means we can meaningfully treat every 784 pixels in the image as a separate feature.

\textit{CIFAR10}: consists of 50,000 training and 10,000 test examples with 10 classes, which are 32x32 colour images. Different from MNIST, where each image is centered, objects in CIFAR10 may appear in different parts of images. As a result, each pixel in the image is not a good candidate for a feature. Instead, we use pre-trained ResNet-18 networks to extract hidden representations from the images and treat the extracted representations as features.

\paragraph{Data Sample Allocation}
For the Robusta training, we partitioned the training set in each dataset into RL training, RL validation, and RL test set. At each step, We used the RL training set to train an ML model with the current subset of features. We then measured the features' performance on the RL validation set. We also made a corrupted dataset from the RL validation set, which contained corrupted samples with Gaussian noise. The performance on the RL validation set and the robustness on the corrupted validation set were used to compute the reward. We made an adversarial test set by performing adversarial attacks on the RL test set. We recorded the performance on the RL test set and robustness on the adversarial test set and reported them in Figure~\ref{fig:rl_perf}. For the feature subset evaluation, we used the standard training/test sets. We trained the model on the training set and evaluated the model on the test set. Hyper-parameter tuning is directly done using the training set. Through out the experiment, the data that is used to train\&evaluate the RL agent and the data that used to evaluate the selected features are hold exclusive.

\paragraph{Tips and Tricks for Training the RL Agent}
We describe three tricks, which is associated with the reward function $R$ and the reward shaping function $F$, we used in the experiment and their application scenarios. (1) If the RL agent only selects features to improve either performance or robustness while ignores the other one, assign different weights to the terms in the reward function $R$ and the reward shaping function $F$. More specifically, tune $\lambda_{nat}$ and $\lambda_{Gaussian}$ in $\lambda_{nat} \cdot \hat{\mathcal{L}}_{nat}^{0-1}(g_{\bm{c}}) + \lambda_{Gaussian} \cdot \hat{\mathcal{L}}_{\epsilon-Gaussian}^{0-1}(g_{\bm{c}})$. (2) If the first trick does not work, try to add a clipping function to $\hat{\mathcal{L}}_{nat}^{0-1}(g_{\bm{c}_{s}}) - \hat{\mathcal{L}}_{nat}^{0-1}(g_{\bm{c}_{s'}})$ or $\hat{\mathcal{L}}_{\epsilon-Gaussian}^{0-1}(g_{\bm{c}_{s}}) - \hat{\mathcal{L}}_{\epsilon-Gaussian}^{0-1}(g_{\bm{c}_{s'}})$ in the reward shaping function $F$ and make it saturate at large value. This trick is not recommended because it may break Theorem~\ref{theorem:3.1}. (3) If the RL agent only selects features from one of the scoring metrics and does not produce desirable selection results, the potential reason is that the first few actions get large rewards while the following actions get small rewards because the accuracy improves slower when many features have been selected. A possible fix is to multiply the reward shaping function $F$ with $\mathrm{log}(m)$ where $m$ is the number of features that have been selected. This is also not recommended due to the same reason as the second trick. However, try these tricks if the RL agent consistently fails to produce a desirable result.

\section{Evaluating the Effectiveness of Robusta Framework Under FGSM Attack}
\label{section:FGSM_robustness}

Similar to the experiment with PGD attacks in section~\ref{section:rl_eval}, the features selected by our framework is consistently more robust against FGSM attacks as Table~\ref{table:feature_robustness_fgsm} shows.

\begin{table*}[!t]
\caption{Performance (accuracy on benign samples) of the ML Model using selected features}
\label{feature_performance}
\begin{center}
\begin{small}
\begin{threeparttable}
\begin{sc}
\begin{tabular}{lcccccc}
\toprule
Data set ($\epsilon$) & Stable & Mutual Info & LASSO & Concrete & Robusta \\
\midrule
Spam ($8/255$) & \textbf{91.7 $\pm$ 1.98\%} & 85.10\% $\pm$ 2.35\% & 80.06\% $\pm$ 0.87\% & 80.36\% $\pm$ 1.85\% & 77.27\% $\pm$ 0.55\% \\

Isolet ($1/10$) & \textbf{91.7 $\pm$ 1.65\%} & 59.50\% $\pm$ 1.79\% & 76.65\% $\pm$ 0.39\% & 81.54\% $\pm$ 0.22\% & 81.99\% $\pm$ 0.19\% \\

MNIST ($1/10$) & / & 55.02\% $\pm$ 0.47\% & 94.55\% $\pm$ 0.02\% & 97.21\% $\pm$ 0.08\% & 95.76\% $\pm$ 0.11\% \\

MNIST ($2/10$) & / & 54.65\% $\pm$ 0.41\% & 94.54\% $\pm$ 0.07\% & 97.24\% $\pm$ 0.09\% & 95.71\% $\pm$ 0.21\% \\

MNIST ($3/10$) & / & 54.60\% $\pm$ 0.80\% & 94.58\% $\pm$ 0.18\% & 97.22\% $\pm$ 0.07\% & 95.68\% $\pm$ 0.19\% \\

CIFAR ($8/255$) & / & 94.13\% $\pm$ 0.12\% & 94.43\% $\pm$ 0.08\% & 94.44\% $\pm$ 0.02\% & 90.92\% $\pm$ 0.09\% \\
\bottomrule
\end{tabular}
\end{sc}
\begin{tablenotes}
\item [*]We bold the numbers if the best method outperforms all the others by 3\%.
\end{tablenotes}
\end{threeparttable}
\end{small}
\end{center}
\end{table*}

\begin{table*}[!t]
\caption{Robustness (accuracy on adversarial examples) of the ML model using selected features under PGD attack}
\label{feature_robustness_pgd}
\begin{center}
\begin{small}
\begin{threeparttable}
\begin{sc}
\begin{tabular}{lcccccc}
\toprule
Data set ($\epsilon$) & Stable & Mutual Info & LASSO & Concrete & Robusta \\
\midrule
Spam ($8/255$) & 18.10\% $\pm$ 0.67\% & 14.03\% $\pm$ 1.55\% & 55.36\% $\pm$ 2.43\% & 49.73\% $\pm$ 0.75\% & \textbf{68.03\% $\pm$ 1.53\%} \\

Isolet ($1/10$) & 25.98\% $\pm$ 1.03\% & 23.28\% $\pm$ 1.38\% & 42.74\% $\pm$ 0.55\% & 24.13\% $\pm$ 0.59\% & \textbf{48.02\% $\pm$ 1.07\%} \\

MNIST ($1/10$) & / & 27.56\% $\pm$ 0.46\% & 77.82\% $\pm$ 0.12\% & 77.93\% $\pm$ 1.35\% & \textbf{83.19\% $\pm$ 0.35\%} \\

MNIST ($2/10$) & / & 16.44\% $\pm$ 0.40\% & 38.27\% $\pm$ 1.77\% & 27.10\% $\pm$ 1.24\% & \textbf{44.87\% $\pm$ 1.74\%}  \\

MNIST ($3/10$) & / & 10.56\% $\pm$ 1.40\% & 14.14\% $\pm$ 0.46\% & 4.67\% $\pm$ 0.91\% & \textbf{18.11\% $\pm$ 1.76\%} \\

CIFAR ($8/255$) & / & 1.86\% $\pm$ 0.11\% & 7.25\% $\pm$ 0.18\% & 14.29\% $\pm$ 0.28\% & \textbf{36.74\% $\pm$ 0.46\%} \\
\bottomrule
\end{tabular}
\end{sc}
\begin{tablenotes}
\item [*]We bold the numbers if the best method outperforms all the others by 3\%.
\end{tablenotes}
\end{threeparttable}
\end{small}
\end{center}
\end{table*}

\begin{table*}[!t]
\caption{Robustness of the ML Model using Selected Features under FGSM attack}
\label{table:feature_robustness_fgsm}
\begin{center}
\begin{small}
\begin{threeparttable}
\begin{sc}
\begin{tabular}{lccccr}
\toprule
Data set & Mutual Info & LASSO & Concrete & Robusta \\
\midrule
SpamBase($\epsilon = 8/255$) & 14.11\% $\pm$ 1.94\% & 57.00\% $\pm$ 1.92\% & 51.13\% $\pm$ 1.66\% & \textbf{69.10\% $\pm$ 1.04\%} \\
Isolet($\epsilon = 1/10$) & 23.49\% $\pm$ 1.07\% & 42.50\% $\pm$ 0.36\% & 26.12\% $\pm$ 0.68\% & \textbf{47.18\% $\pm$ 1.32\%} \\
MNIST($\epsilon = 1/10$) & 30.48\% $\pm$ 0.64\% & 78.66\% $\pm$ 0.23\% & 78.69\% $\pm$ 0.68\% & \textbf{83.37\% $\pm$ 0.41\%} \\
MNIST($\epsilon = 2/10$) & 20.57\% $\pm$ 0.14\% & 45.36\% $\pm$ 1.77\% & 38.14\% $\pm$ 0.79\% & \textbf{49.92\% $\pm$ 1.05\%}  \\
MNIST($\epsilon = 3/10$) & 14.50\% $\pm$ 0.87\% & 25.26\% $\pm$ 1.24\% & 18.45\% $\pm$ 0.92\% & \textbf{29.00\% $\pm$ 2.12\%} \\
CIFAR($\epsilon = 8/255$) & 49.08\% $\pm$ 0.11\% & 53.30\% $\pm$ 0.20\% & 56.43\% $\pm$ 0.05\% & \textbf{63.20\% $\pm$ 0.19\%} \\
\bottomrule
\end{tabular}
\end{sc}
\begin{tablenotes}
\item [*]We bold the numbers if the best method outperforms all the others by 3\%.
\end{tablenotes}
\end{threeparttable}
\end{small}
\end{center}
\end{table*}

\end{document}